\newtheorem{theorem}{Theorem}
\newtheorem{definition}{Definition}
\newtheorem{example}{Example}
\newtheorem{lemma}{Lemma}
\newtheorem{remark}{Remark}
\newcommand{\eqdef}{\overset{def}{=}}
\title{Autoencoders for a manifold learning problem with a Jacobian rank constraint}
\author{
  Rustem Takhanov, Y. Sultan Abylkairov, Maxat Tezekbayev \\
  Department of Mathematics, School of Sciences and Humanities \\
  Nazarbayev University \\
  Astana\\
  \texttt{\{rustem.takhanov, sultan.abylkairov, maxat.tezekbayev\}@nu.edu.kz} \\
}
\begin{document}
\maketitle

\begin{abstract}
We formulate the manifold learning problem as the problem of finding an operator that maps any point to a close neighbor that lies on a ``hidden'' $k$-dimensional manifold. We call this operator the correcting function. Under this formulation, autoencoders can be viewed as a tool to approximate the correcting function. 
Given an autoencoder whose Jacobian has rank $k$, we deduce from the classical Constant Rank Theorem that its range has a structure of a $k$-dimensional manifold.

A $k$-dimensionality of the range can be forced by the architecture of an autoencoder (by fixing the dimension of the code space), or alternatively, by an additional constraint that the rank of the autoencoder mapping is not greater than $k$. This constraint is included in the objective function as a new term, namely a squared Ky-Fan $k$-antinorm of the Jacobian function.
We claim that this constraint is a factor that effectively reduces the dimension of the range of an autoencoder, additionally to the reduction defined by the architecture. We also add a new curvature term into the objective. To conclude, we experimentally compare our approach with the CAE+H method on synthetic and real-world datasets.
\end{abstract}

\keywords{manifold learning\and dimensionality reduction\and alternating algorithm\and Ky Fan antinorm\and  autoencoders\and rank constraints.}

\section{Introduction} 
The unsupervised manifold hypothesis~\cite{caytonalgorithms} is a general statement about a probability distribution in a high-dimensional space that conjectures data to be locally concentrated along a low dimensional (compared to a dimension of an ambient space) surface. Historically, the hypothesis appeared as a development of Principal Components Analysis (PCA) method and can be considered as a non-linear extension of it. 

If the hypothesis holds for a given dataset, then recovering the underlying manifold is the main goal of the manifold learning problem. Knowledge extracted at a manifold learning stage could be further exploited in various settings such as manifold clustering, manifold alignment or regularization of supervised learning techniques~\cite{Goodfellow-et-al-2016}. Besides reconstructing a manifold, it is of special interest to calculate a basis of the tangent space of a ``hidden'' manifold at any point. Usually, tangent spaces are reconstructed at a pre-training stage, i.e. prior to training a supervised model. Afterward, different techniques, such as the TangentProp method~\cite{TangentProp}, can be applied to regularize a supervised model to make an output of a regression function more robust with respect to an input perturbation along the manifold.

Various techniques, including~\cite{Roweis,Tenenbaum,Donoho5591,Vincent,Diffeomorphic,Belkin} and many others, were suggested for the manifold learning problem. Applications of these techniques include Human Pose Recovery~\cite{Chaoqun1,Chaoqun2}, Fine-Grained Image Recognition~\cite{YuJun}, structured prediction~\cite{YulingFan,ChaoTan} and many others.
One of the most popular approaches is based on contractive autoencoders (CAE)~\cite{Kramer,Hinton}. The contractive autoencoder with a hessian method (CAE+H)~\cite{Rifai} trains an autoencoder by minimizing the reconstruction error and at the same time ensuring the robustness of an encoder's Jacobian with respect to small perturbations of an input. The third term of CAE+H is a standard regularizer of the norm of the Jacobian. Experiments showed that low-dimensional representations of input objects obtained by the CAE+H at the pre-training stage help to improve the accuracy of trained classifiers for many different tasks~\cite{Rifai,Rifai2}.

{\bf Motivation.} Our approach starts from the analysis of mathematical structures behind the idea of the contractive autoencoder.
The fact that the reconstruction of an input vector in CAE goes through an intermediate low-dimensional code space guarantees that the output point belongs to some low-dimensional manifold (this follows from the classical Constant Rank Theorem). The idea of enforcing the robustness of the Jacobian with respect to perturbations comes from the geometric fact that regularizing this term is equivalent to bounding a manifold's curvature. 
We start from this geometric interpretation but suggest a novel approach to training an autoencoder (and, therefore, to choosing an architecture). We view the dimensionality reduction as a result of two factors: the first is defined by an architecture of an autoencoder (the dimension of the code space) and the second is defined by the structure of trained weights of an autoencoder. The dimension of the code space bounds the rank of the autoencoder by construction. 
The second factor is enforced by the addition of a new term to a reconstruction error that forces the Jacobian of an autoencoder to be close to some matrix of rank $k$. We show that this constraint is equivalent to requiring the range of the autoencoder to be close to a $k$-dimensional manifold. Thus, the second dimension reduction mechanism is of a ``soft'' nature, unlike the first mechanism, which is a ``crisp'' dimension reduction defined by the architecture of the autoencoder. 

Why do we need a soft rank-controlling mechanism? Real-world datasets are heterogeneous, namely, they consist of parts that come from different distributions, and supports of these distributions can have different intrinsic dimensions. Let us demonstrate the latter idea with the following toy example. 
\begin{wrapfigure}{l}{0.5\textwidth}\label{toy}
    \centering
    \includegraphics[width=0.5\textwidth]{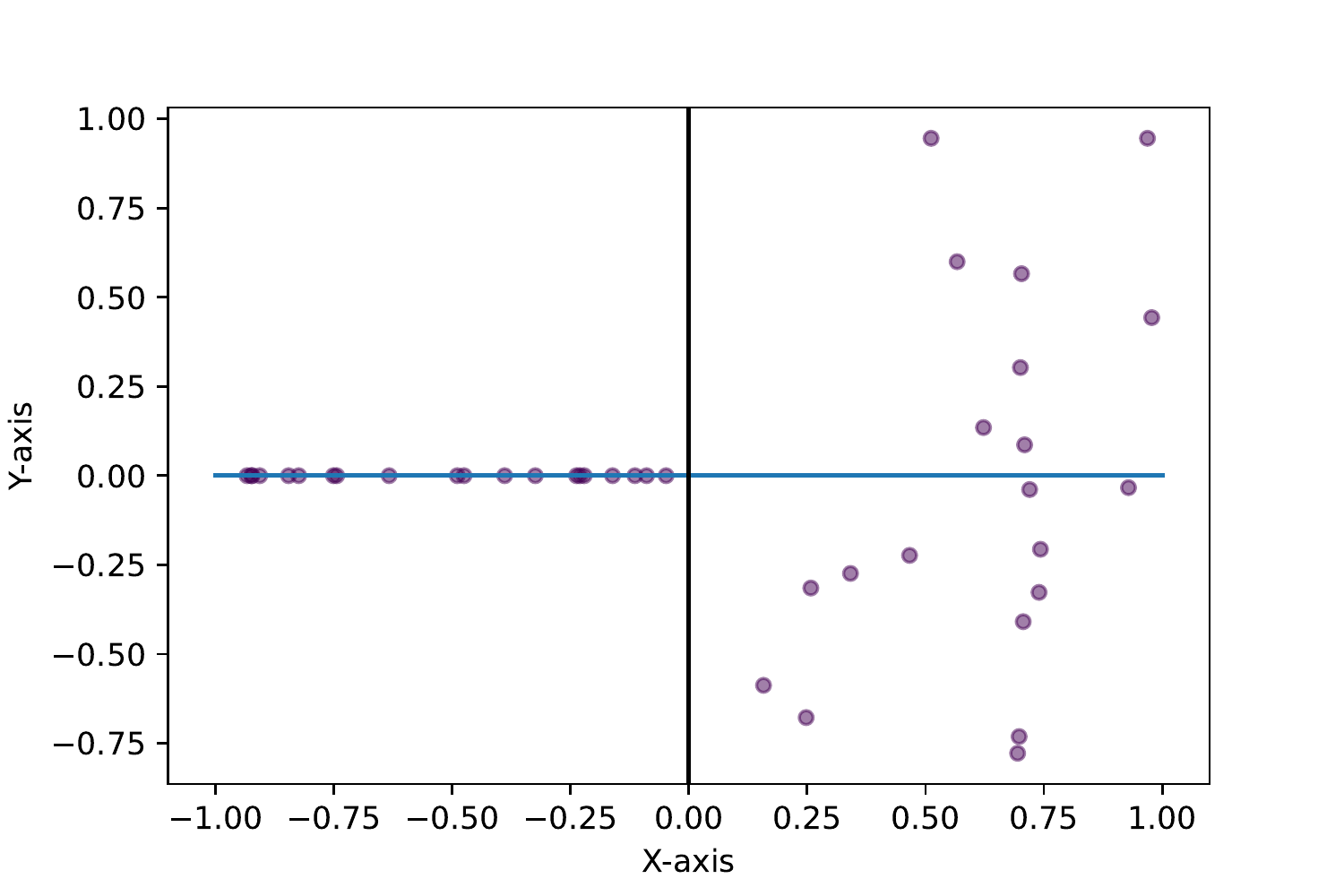}
\caption{An optimal curve (blue line) has the reconstruction error $\frac{1}{6}$ because the dataset consists of 2 parts, 1-dimensional and 2-dimensional.}
\end{wrapfigure}
\begin{example}
Suppose that points $\{{\mathbf x}_i\}_{i=1}^N\subseteq {\mathbb R}^2$ are sampled independently in the following way: ${\mathbf x}_i = (x_i, y_i[x_i>0])$ where $x_i, y_i$ are independent and uniform over $[-1,1]$ and $[\cdot]$ is the Iverson bracket (see Figure~\ref{toy}). 

Let us consider a contractive autoencoder with a dimension of the code space 1, i.e. ${\mathbf g} = {\mathbf d}\circ {\mathbf e}$ where ${\mathbf e}: {\mathbb R}^2\to {\mathbb R}$ is an encoder and ${\mathbf d}: {\mathbb R}\to {\mathbb R}^2$ is a decoder. Here, by construction, the range of ${\mathbf g}$ is a curve. If the autoencoder's architecture captures projections, then a training process will result in an optimal ${\mathbf g}(x,y) \approx (x, 0)$. A reconstruction error in the latter case will be ${\rm RE} = {\mathbb E}\big({\mathbf g}(x,y)-(x,y)\big)^2\approx \frac{1}{4}\int_{-1}^{1}t^2dt = \frac{1}{6}$. Alternatively, if we define the dimension of the code space as 2, i.e. ${\mathbf e}, {\mathbf d}: {\mathbb R}^2\to {\mathbb R}^2$, natural architectures allow us to train ${\mathbf g}(x,y) \approx (x,y [y>0])$. In the latter case, the reconstruction error becomes zero at the expense of losing a desirable property of having a 1-dimensional range of the mapping ${\mathbf g}$. But we gain another desirable property: the gradient of ${\mathbf g}$ has rank 1 in approximately half of the dataset, i.e. ${\rm rank\,}\nabla {\mathbf g}(x,y)=1$ if $y<0$. Thus, we have a tradeoff between two desirable properties: (a) the dimension of the code space 1 guarantees that ${\rm rank\,}\nabla {\mathbf g}(x,y)=1$ everywhere, but it is impossible to achieve a low reconstruction error, (b) by omitting the latter crisp condition, we may achieve lowest reconstruction error possible, but ${\rm rank\,}\nabla {\mathbf g}(x,y)=1$ is now satisfied only at half of the points.
\end{example}
In the given example, the support of the distribution is a union of a line and a half-plane, though one can imagine more sophisticated situations where manifolds of dimensions $n_1, n_2, n_3, \cdots$ each contain a substantial piece of a dataset.  In this case, a crisp requirement of a fixed rank of an autoencoder can be naturally replaced by some soft measure. Our paper suggests one specific way to introduce such a soft measure based on a Ky Fan $k$-antinorm.

The paper is organized in the following way. In Section~\ref{formulation} we define a $k$-dimensional manifold and show that any mapping whose range is a $k$-dimensional manifold has a Jacobian of rank $k$ (Theorem~\ref{primal}). Next, we present a weaker opposite statement, Theorem~\ref{opposite} (using a classical result from topology, the Constant Rank Theorem). We demonstrate that from a full rank mapping onto a $k$-dimensional manifold (i.e. a mapping whose Jacobian is of rank $k$) one can extract full information about the manifold itself (and a local structure of the manifold is defined by the Jacobian, the Hessian, or higher-order derivatives of the mapping). These results allow us to define an objective in the manifold learning task only in terms of such a mapping, which is subject to optimization in the form of an autoencoder. In subsection~\ref{Curvature} we introduce three expressions, $\kappa_0$, $\kappa_1$ and $\kappa_2$,  given in terms of the Jacobian/Hessian of a mapping onto the manifold, that are tightly related to the curvature of the manifold. The curvature-related term is then included in the objective of an autoencoder training procedure. Further, in subsection~\ref{softly}, we describe two main approaches to force a Jacobian rank to be not greater than $k$, a crisp  (the architecture-based) and a soft (based on a Ky-Fan $k$-antinorm).
In Section~\ref{a-algoritm} we describe an alternating algorithm for the minimization of the objective with a soft rank reducing term. In Section~\ref{experiments} we describe our experiments on synthetic and real-world data and proceed to the conclusion.
\subsection{Notation}
Throughout, we will denote components of a vector ${\mathbf x}\in {\mathbb R}^d$ by  $x_1, \cdots, x_d$ and components of a mapping ${\mathbf g}: {\mathbb R}^d\to {\mathbb R}^e$ by $g_1: {\mathbb R}^d\to {\mathbb R}, \cdots, g_e: {\mathbb R}^d\to {\mathbb R}$.  The Jacobian matrix $[\frac{\partial g_i}{\partial x_j}]_{i=\overline{1,d},j=\overline{1,e}}$ is denoted by $J_{\mathbf g}$. For $\Omega\subseteq {\mathbb R}^d$, ${\mathbf g}(\Omega) = \{{\mathbf g}(x) \mid x\in \Omega\}$.
For a function $f:{\mathbb R}^n \to {\mathbb R}\cup \{+\infty\}$, let us denote $$\sup \{L \mid   L=\lim_{i\to \infty}f(\boldsymbol{\varepsilon}_i)<\infty, \lim_{i\to \infty}\boldsymbol{\varepsilon}_i={\mathbf 0}\}$$ by $\overline{\lim}_{\boldsymbol{\varepsilon}\to {\mathbf 0}} f(\boldsymbol{\varepsilon})$. If $f$ is twice differentiable at a point ${\mathbf x}$, then a Hessian of $f$ at ${\mathbf x}$ is denoted by $H_f({\mathbf x})$.
Also, we denote the euclidean norm, the maximum norm and the Frobenius norm by $\|{\mathbf x}\|$, $\|{\mathbf x}\|_\infty $ and $\|A\|_F$ correspondingly. The composition of functions ${\mathbf d}$ and ${\mathbf e}$ is denoted by ${\mathbf d}\circ {\mathbf e}$. The symbol ${\mathcal O}(f(n))$ denotes any function bounded by $Cf(n)$ for some constant $C>0$.
\section{Manifold learning problem}\label{formulation}
In the mathematical literature one can find many non-equivalent definitions of manifold. 
We prefer more elementary language, avoiding such terms as the chart, the atlas, the diffeomorphism, etc. With a little abuse of terminology, we give definitions that are general enough to serve our purposes.

\begin{definition}\label{like}
A subset $U\subseteq {\mathbb R}^n$ is said to be {\em like ${\mathbb R}^k$} if there exist  an open set $V\subseteq {\mathbb R}^k$ and infinitely differentiable one-to-one mapping $\phi: V\rightarrow U$, whose inverse $\phi^{-1}: U\rightarrow V$ can be extended to an infinitely differentiable mapping $\psi: S\to {\mathbb R}^k$ on an open superset $S\supseteq U$, i.e. $\phi^{-1}(x) = \psi(x), x\in U$.
\end{definition}

\begin{definition}
A subset $\mathcal{M}\subseteq {\mathbb R}^n$ is called a $k$-dimensional manifold (or, a $k$-manifold) if any point 
$p\in \mathcal{M}$ has an open neighborhood $U\subseteq {\mathbb R}^n$ such that $U\cap \mathcal{M}$ is like ${\mathbb R}^k$. 
\end{definition}

Nonetheless, let us note that a given definition is equivalent to a standard notion of a $k$-dimensional smooth manifold embedded in ${\mathbb R}^n$~\cite{lee2013introduction}. The strong Whitney embedding theorem tells us that actually this notion captures all smooth manifolds modulo a diffeomorphism.

The (unsupervised) manifold hypothesis claims that observable data points ${\mathbf x}_1$, ${\mathbf x}_2$, $\cdots$, ${\mathbf x}_N$ can be slightly ``corrected'', i.e. ${\mathbf x}_i$ can be substituted with ${\mathbf y}_i$, such that the cost of the correction:
$$
\sum_{i=1}^N \|{\mathbf x}_i-{\mathbf y}_i\|^2
$$
is small and all our substitutes ${\mathbf y}_1$, ${\mathbf y}_2$, $\cdots$, ${\mathbf y}_N$ lie on some $k$-dimensional manifold $\mathcal{M}$. The main assumption of the hypothesis is that $k \ll n$ and the manifold $\mathcal{M}$ is sufficiently smooth (the latter is understood geometrically and discussed in Section~\ref{Curvature}). Without the smoothness assumption, our hypothesis can be trivially satisfied by setting ${\mathbf y}_i\leftarrow {\mathbf x}_i$ and any $k$-manifold $\mathcal{M}$ that contains the initial data points. Obviously, this leads to overfitting.

\subsection{The correcting mapping and constraints on the rank of the Jacobian} 
Assuming the manifold hypothesis, we can formulate the manifold learning problem as the problem of recovering the ``hidden'' manifold $\mathcal{M}$ from data points. If such hidden manifold exists, then there exists a smooth mapping ${\mathbf g}^\ast: {\mathbb R}^n\rightarrow \mathcal{M}$ such that ${\mathbf g}^\ast({\mathbf x}_i) = {\mathbf y}_i$. Let us call ${\mathbf g}^\ast$ the correcting mapping. Given a family of $k$-manifolds ${\mathfrak M} \subseteq \{\mathcal{M} | \mathcal{M}\subseteq {\mathbb R}^n\}$, it is natural to recover the ``hidden'' manifold $\mathcal{M}^\ast$ and the function ${\mathbf g}^\ast$ simply by setting:
\begin{equation}\label{solution}
(\mathcal{M}^\ast, {\mathbf g}^\ast) \leftarrow \arg\min_{\mathcal{M}\in {\mathfrak M}}\min\limits_{{\mathbf g}: {\mathbb R}^n\rightarrow \mathcal{M}} \frac{1}{N}\sum_{i=1}^N \|{\mathbf x}_i-{\mathbf g}({\mathbf x}_i)\|^2,
\end{equation}
where the internal minimization is made over all functions ${\mathbf g}: {\mathbb R}^n\rightarrow \mathcal{M}$. By construction, for fixed $\mathcal{M}$, the minimum over ${\mathbf g}: {\mathbb R}^n\rightarrow \mathcal{M}$ is attained at $${\mathbf g}^\mathcal{M}({\mathbf x}) \in {\rm Arg}\min_{{\mathbf y}\in \mathcal{M}} \|{\mathbf y}-{\mathbf x}\|.$$ 
If the latter argument is not unique then one can choose an arbitrary point from a set of alternatives.
Thus, given the first element in the pair~\eqref{solution}, i.e. the manifold $\mathcal{M}^\ast$, we can also recover ${\mathbf g}^\ast$ by setting $ {\mathbf g}^\ast \leftarrow {\mathbf g}^{\mathcal{M}^\ast}$.

Since the manifold is never given, we have to calculate both the manifold $\mathcal{M}^\ast$ and the correcting mapping ${\mathbf g}^\ast$ simultaneously. 
The following theorem restricts the set of possible correcting mappings, even if $\mathcal{M}^\ast$ is not given.

\begin{theorem}\label{primal} Let ${\mathbf g}: {\mathbb R}^n \rightarrow \mathcal{M}$ be a continuously differentiable mapping such that $\mathcal{M}\subseteq {\mathbb R}^n$ is a $k$-manifold. Then, for any point ${\mathbf x}$, we have ${\rm rank\,} J_{{\mathbf g}}({\mathbf x}) \leq k$.
\end{theorem}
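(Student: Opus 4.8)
The plan is to work locally at an arbitrary point, factor ${\mathbf g}$ through ${\mathbb R}^k$, and read off the rank bound from the chain rule. Fix ${\mathbf x}\in{\mathbb R}^n$ and put $p={\mathbf g}({\mathbf x})\in\mathcal{M}$. Since $\mathcal{M}$ is a $k$-manifold, there is an open neighborhood $U\subseteq{\mathbb R}^n$ of $p$ for which $U\cap\mathcal{M}$ is like ${\mathbb R}^k$; let $V\subseteq{\mathbb R}^k$ be open, $\phi\colon V\to U\cap\mathcal{M}$ smooth and one-to-one, and $\psi\colon S\to{\mathbb R}^k$ the smooth extension of $\phi^{-1}$ to an open set $S\supseteq U\cap\mathcal{M}$ in ${\mathbb R}^n$ (so $\psi|_{U\cap\mathcal{M}}=\phi^{-1}$), exactly as in Definition~\ref{like}.

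First I would use the continuity of ${\mathbf g}$ to choose an open neighborhood $W\subseteq{\mathbb R}^n$ of ${\mathbf x}$ with ${\mathbf g}(W)\subseteq U$; since ${\mathbf g}$ takes all of its values in $\mathcal{M}$, this gives ${\mathbf g}(W)\subseteq U\cap\mathcal{M}\subseteq S$. Hence $h\eqdef\psi\circ{\mathbf g}|_W\colon W\to{\mathbb R}^k$ is well defined and is $C^1$, being the composition of the $C^1$ map ${\mathbf g}$ with the smooth map $\psi$. Because $\psi$ agrees with $\phi^{-1}$ on $U\cap\mathcal{M}$ and ${\mathbf g}(W)\subseteq U\cap\mathcal{M}$, one gets $h(W)\subseteq V$ and $\phi\circ h={\mathbf g}$ on $W$; that is, ${\mathbf g}$ restricted to the neighborhood $W$ of ${\mathbf x}$ factors as $\phi\circ h$ with $h$ valued in ${\mathbb R}^k$.

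Next I would differentiate this factorization at ${\mathbf x}$ via the chain rule: $D{\mathbf g}({\mathbf x})=D\phi\bigl(h({\mathbf x})\bigr)\circ Dh({\mathbf x})$. Since $Dh({\mathbf x})\colon{\mathbb R}^n\to{\mathbb R}^k$, this linear map factors through the $k$-dimensional space ${\mathbb R}^k$, so its rank — which equals ${\rm rank\,}J_{{\mathbf g}}({\mathbf x})$ — is at most $k$. As ${\mathbf x}$ was arbitrary, ${\rm rank\,}J_{{\mathbf g}}({\mathbf x})\le k$ at every point, which is the claim.

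The one delicate point — and the very reason Definition~\ref{like} requires $\phi^{-1}$ to extend to a smooth map $\psi$ on a \emph{full-dimensional} open set $S\subseteq{\mathbb R}^n$ — is that $\phi^{-1}$ by itself lives only on the lower-dimensional set $U\cap\mathcal{M}$, where the ambient Jacobian is not defined; the extension $\psi$ is precisely what allows $h=\psi\circ{\mathbf g}$ to be differentiated in ${\mathbb R}^n$ so that the chain rule applies. Everything else — the two inclusions ${\mathbf g}(W)\subseteq U\cap\mathcal{M}$ and $h(W)\subseteq V$, and the $C^1$-regularity of $h$ — is routine.
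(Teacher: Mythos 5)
Your proof is correct and follows essentially the same route as the paper: both localize near ${\mathbf x}$, use the extension $\psi$ of $\phi^{-1}$ to write ${\mathbf g}=\phi\circ(\psi\circ{\mathbf g})$ on a neighborhood obtained from continuity of ${\mathbf g}$, and then apply the chain rule so that $J_{\mathbf g}({\mathbf x})$ factors through ${\mathbb R}^k$, giving ${\rm rank\,}J_{{\mathbf g}}({\mathbf x})\leq k$. Your write-up is in fact slightly more careful than the paper's in spelling out ${\mathbf g}(W)\subseteq U\cap\mathcal{M}$, $h(W)\subseteq V$, and why the extension $\psi$ is needed to differentiate, but the argument is the same.
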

\begin{proof} Since $\mathcal{M}$ is a $k$-manifold, $U\cap \mathcal{M}$ is like ${\mathbb R}^k$ for some open neighbourhood $U$ of ${\mathbf g}({\mathbf x}^\ast)\in \mathcal{M}$. Thus, there exist open sets $S\supseteq U\cap \mathcal{M}$, $V\subseteq {\mathbb R}^k$ and infinitely differentiable mappings $\psi: S\rightarrow {\mathbb R}^k$, $\phi: V\rightarrow U\cap \mathcal{M}$ such that $\phi$ is one-to-one and $\psi \circ \phi (x) = x$. 
From the continuity of ${\mathbf g}$ we get that 
${\mathbf g}^{-1}(U)\subseteq {\mathbb R}^n$
is an open neighbourhood of ${\mathbf x}^\ast$.

By construction, ${\mathbf x}^\ast\in {\mathbf g}^{-1}(U)$. Also, we have
 ${\mathbf g}({\mathbf x}) = \phi(\psi({\mathbf g}({\mathbf x})))$ for any ${\mathbf x}\in {\mathbf g}^{-1}(U)$, and, therefore, ${\rm rank\,} J_{{\mathbf g}}({\mathbf x}^\ast)  = {\rm rank\,} J_{\phi}(\psi({\mathbf g}({\mathbf x}^\ast))) J_{\psi}({\mathbf g}({\mathbf x}^\ast)) J_{\mathbf g}({\mathbf x}^\ast) \leq {\rm rank\,} J_{\phi} \leq k$.
\end{proof}

The opposite statement is not true, though we can deduce a weaker property.

\begin{theorem}\label{opposite}
Let ${\mathbf g}: S \rightarrow {\mathbb R}^n$ be an infinitely differentiable mapping on an open set $S\subseteq {\mathbb R}^n$ such that for any point ${\mathbf x}^\ast\in S$, we have ${\rm rank\,} J_{\mathbf g}({\mathbf x}^\ast) = k$. Then, the topological space $({\mathbf g}(S), T[{\mathbf g}])$, where $T[{\mathbf g}]$ is a topology generated by the base $B = \{{\mathbf g}(\Omega)\mid \Omega{\rm\,\,is\,\,open}\}$, satisfies the property that any $p\in {\mathbf g}(S)$ has a neighbourhood $A\in T[{\mathbf g}]$ that is like ${\mathbb R}^k$.
\end{theorem}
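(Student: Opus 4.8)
The plan is to localize $\mathbf{g}$ via the Constant Rank Theorem and then verify, essentially by inspection, that the resulting local picture matches Definition~\ref{like}. Fix $p\in\mathbf{g}(S)$ and pick $\mathbf{x}^\ast\in S$ with $\mathbf{g}(\mathbf{x}^\ast)=p$. Since $\rank J_{\mathbf{g}}\equiv k$ on $S$, the map $\mathbf{g}$ has constant rank $k$ near $\mathbf{x}^\ast$, so the Constant Rank Theorem supplies open neighbourhoods $U_0\ni\mathbf{x}^\ast$ and $W_0\ni p$ with $\mathbf{g}(U_0)\subseteq W_0$, together with diffeomorphisms $\varphi\colon U_0\to\widetilde U_0\subseteq\mathbb{R}^n$ and $\chi\colon W_0\to\widetilde W_0\subseteq\mathbb{R}^n$, such that on $\widetilde U_0$ we have $\chi\circ\mathbf{g}\circ\varphi^{-1}=\pi$, where $\pi(x_1,\dots,x_n)=(x_1,\dots,x_k,0,\dots,0)$. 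Shrinking $U_0$ and translating, we may assume $\varphi(\mathbf{x}^\ast)=\mathbf{0}$ and $\widetilde U_0=C:=(-\varepsilon,\varepsilon)^n$.

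Next I would take $\Omega:=\varphi^{-1}(C)=U_0$, which is open in $\mathbb{R}^n$, and set $A:=\mathbf{g}(\Omega)$. Then $A\in B$, so $A$ is open in $T[\mathbf{g}]$, and $p=\mathbf{g}(\mathbf{x}^\ast)\in A$; hence $A$ is a $T[\mathbf{g}]$-neighbourhood of $p$, and it only remains to show $A$ is like $\mathbb{R}^k$. From the normal form, $A=\mathbf{g}(\varphi^{-1}(C))=\chi^{-1}(\pi(C))=\chi^{-1}\!\big((-\varepsilon,\varepsilon)^k\times\{\mathbf{0}\}\big)$. Put $V:=(-\varepsilon,\varepsilon)^k$, an open subset of $\mathbb{R}^k$, and define $\phi\colon V\to A$ by $\phi(\mathbf{y})=\chi^{-1}(\mathbf{y},\mathbf{0})$. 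Being a composition of smooth maps, $\phi$ is infinitely differentiable; it is injective because $\chi^{-1}$ is; and it is onto $A$ by the displayed identity.

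Finally, $\phi^{-1}$ is the restriction to $A$ of $\psi:=\pi_k\circ\chi\colon W_0\to\mathbb{R}^k$, where $\pi_k$ keeps the first $k$ coordinates, since $\psi(\chi^{-1}(\mathbf{y},\mathbf{0}))=\mathbf{y}$. The map $\psi$ is infinitely differentiable, is defined on the open set $W_0$, and $A\subseteq\mathbf{g}(U_0)\subseteq W_0$, so $\psi$ is an admissible extension of $\phi^{-1}$ in the sense of Definition~\ref{like}. Hence $A$ is like $\mathbb{R}^k$, which completes the argument.

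I expect the genuine content to be confined to two points. First, one must invoke the Constant Rank Theorem in the precise form that produces charts simultaneously at the source and at the target with $\mathbf{g}(U_0)$ inside the domain of the target chart; once that is in hand, the rest is bookkeeping about nested domains ($A\subseteq\mathbf{g}(U_0)\subseteq W_0$), ensuring $\psi$ lives on a genuine open superset of $A$ as the definition requires. Second, it is worth a sentence on why the topology $T[\mathbf{g}]$ is used instead of the subspace topology: $\mathbf{g}(S)\subseteq\mathbb{R}^n$ need not be locally closed (a smooth injective curve accumulating onto itself is the standard obstruction), so $A=\mathbf{g}(\Omega)$ can fail to be open in the subspace topology and hence fail to be a neighbourhood of $p$ there; declaring every image $\mathbf{g}(\Omega)$ of an open set to be open is exactly what repairs this, and that is the only role the topology plays, since ``like $\mathbb{R}^k$'' in Definition~\ref{like} is a purely differential, not topological, condition.
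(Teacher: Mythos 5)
Your proposal is correct and follows essentially the same route as the paper: invoke the Constant Rank Theorem to get source/target charts putting $\mathbf{g}$ in the normal form $(x_1,\dots,x_n)\mapsto(x_1,\dots,x_k,\mathbf{0}_{n-k})$, take $A=\mathbf{g}(\Omega)$ for the chart domain $\Omega$, define $\phi$ as the target chart's inverse restricted to the $k$-dimensional slice and $\psi$ as the coordinate projection composed with the target chart, exactly as in the paper's argument. Your extra step of shrinking the source chart to a cube is a minor tidiness improvement (it makes the identification of $\mathbf{g}(\Omega)$ with the slice image immediate) but does not change the approach.
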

\begin{remark} 
It is natural to call $T[{\mathbf g}]$ {\bf the topology induced by ${\mathbf g}$}.
In the paper we are mainly interested in the case when $T[{\mathbf g}]$ coinsides with a subspace topology on ${\mathbf g}(S)\subseteq {\mathbb R}^n$, though the following example shows that this is not always the case. 
For $n=2, k =1$, let us define ${\mathbf g}(x,y) = {\mathbf r}(x+y)$ where ${\mathbf r}(t) = (t^2-1, t(t^2-1))$. Then, ${\mathbf g}({\mathbb R}^2) = {\mathbf r}({\mathbb R})$ is a cubic curve (which is not a 1-manifold because of the cross-like behavior around a point of self-intersection, see Figure~\ref{Serie}). 
\begin{figure}
\begin{center}
\includegraphics[width=0.4\textwidth]{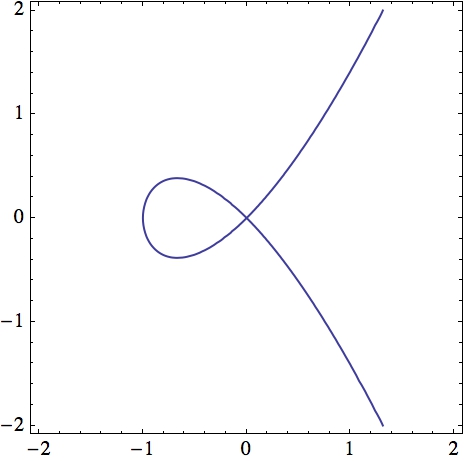}  ~~~~  
\includegraphics[width=0.25\textwidth]{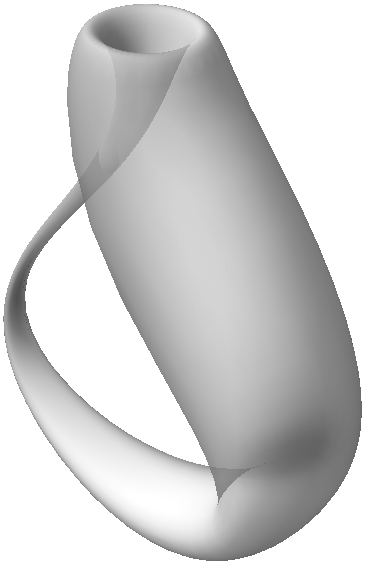}
\caption{The cubic curve (in ${\mathbb R}^2$) and the Klein bottle (in ${\mathbb R}^3$) have self-intersections.}\label{Serie}
\end{center}
\end{figure}


Though we are mostly interested in manifolds {\bf embedded} in the euclidean space (i.e. without self-intersections), autoencoders are able to train a correcting function that maps ${\mathbb R}^n$ onto a smooth manifold {\bf immersed} in ${\mathbb R}^n$ (with self-intersections). For example, if ${\mathbf g} = a\circ b$, where $b:{\mathbb R}^3\to [0,2\pi)\times [0,2\pi)$ is defined by $b(x,y,z) = \big(\frac{2\pi x^2}{x^2+1}, \frac{2\pi y^2}{y^2+1}\big)$ and $a:[0,2\pi)\times [0,2\pi)\to {\mathbb R}^3$ is some parameterization of the Klein bottle in ${\mathbb R}^3$~\cite{klein}, then $({\mathbf g}({\mathbb R}^3), T[{\mathbf g}])$ is a Klein bottle immersed in ${\mathbb R}^3$.
\end{remark}
\begin{theorem}\label{locality} Let $\mathcal{M}$ be a $k$-manifold and ${\mathbf g}: {\mathbb R}^n \rightarrow \mathcal{M}$ be infinitely differentiable.
If ${\rm rank\,} J_{\mathbf g}({\mathbf x}) = k$, then there are open $\Omega\ni {\mathbf x}, \Omega'\ni {\mathbf x}' = {\mathbf g}({\mathbf x})$ such that  $\mathcal{M}\cap \Omega' = {\mathbf g}(\Omega)$.
\end{theorem}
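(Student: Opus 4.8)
The plan is to combine the chart structure of $\mathcal{M}$ near ${\mathbf x}'={\mathbf g}({\mathbf x})$ with the classical fact that a smooth submersion is locally an open map. First I would use that $\mathcal{M}$ is a $k$-manifold to fix an open neighbourhood $U\ni{\mathbf x}'$ together with the data witnessing that $U\cap\mathcal{M}$ is like ${\mathbb R}^k$: an open $V\subseteq{\mathbb R}^k$, a one-to-one infinitely differentiable $\phi:V\to U\cap\mathcal{M}$, and an infinitely differentiable $\psi:S\to{\mathbb R}^k$ on an open $S\supseteq U\cap\mathcal{M}$ with $\psi|_{U\cap\mathcal{M}}=\phi^{-1}$, so that $\psi\circ\phi=\mathrm{id}_V$ and $\phi\circ\psi=\mathrm{id}_{U\cap\mathcal{M}}$. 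Since ${\mathbf g}$ is continuous and maps into $\mathcal{M}$, the set $W_0={\mathbf g}^{-1}(U\cap S)$ is open and contains ${\mathbf x}$; on $W_0$ the map $h\eqdef\psi\circ{\mathbf g}$ is well defined and infinitely differentiable, and since ${\mathbf g}(W_0)\subseteq U\cap\mathcal{M}$ we obtain the factorization ${\mathbf g}=\phi\circ h$ on $W_0$.

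Next I would extract the rank information. Differentiating ${\mathbf g}=\phi\circ h$ at ${\mathbf x}$ yields $J_{{\mathbf g}}({\mathbf x})=J_\phi(h({\mathbf x}))\,J_h({\mathbf x})$, with $J_\phi$ of size $n\times k$ and $J_h$ of size $k\times n$; from $\mathrm{rank}\,J_{{\mathbf g}}({\mathbf x})=k$ and $\mathrm{rank}(AB)\le\mathrm{rank}\,B$, together with the fact that $J_h$ has only $k$ rows, it follows that $\mathrm{rank}\,J_h({\mathbf x})=k$, i.e. $h$ is a submersion at ${\mathbf x}$. Because $\mathrm{rank}\,J_h$ is lower semicontinuous and bounded above by $k$, it equals $k$ on a whole open neighbourhood of ${\mathbf x}$, so $h$ is a submersion there.

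Finally I would invoke the local normal form for submersions (the surjective special case of the Constant Rank Theorem already used for Theorem~\ref{opposite}): on a sufficiently small open $\Omega\ni{\mathbf x}$, after infinitely differentiable changes of coordinates, $h$ becomes the linear projection $(x_1,\dots,x_n)\mapsto(x_1,\dots,x_k)$, hence $h$ is an open map on $\Omega$; shrinking $\Omega$ we may also assume $\Omega\subseteq W_0$ and $h(\Omega)\subseteq V$, so that $h(\Omega)$ is open in ${\mathbb R}^k$. Since $\phi$ is a bijection onto $U\cap\mathcal{M}$ whose inverse is the restriction of the continuous $\psi$, $\phi$ is a homeomorphism onto $U\cap\mathcal{M}$ with the subspace topology, hence $\phi(h(\Omega))$ is open in $U\cap\mathcal{M}$, say $\phi(h(\Omega))=(U\cap\mathcal{M})\cap O$ for some open $O\subseteq{\mathbb R}^n$. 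Putting $\Omega'=U\cap O$, which is open and contains ${\mathbf x}'=\phi(h({\mathbf x}))$, and using ${\mathbf g}=\phi\circ h$ on $\Omega$, I get ${\mathbf g}(\Omega)=\phi(h(\Omega))=\mathcal{M}\cap\Omega'$, which is the claim.

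The one genuinely substantive ingredient is the implication ``a rank-$k$ map into ${\mathbb R}^k$ is locally an open map'', and this is exactly where the hypothesis $\mathrm{rank}\,J_{{\mathbf g}}({\mathbf x})=k$ (rather than merely $\le k$, as guaranteed by Theorem~\ref{primal}) is essential; everything else is bookkeeping of the nested neighbourhoods $U$, $S$, $W_0$, $\Omega$ so that each composition is defined and each openness assertion is made in the correct space. I therefore expect that careful tracking of these domains, not any deep idea, will be the main chore.
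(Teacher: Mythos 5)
Your argument is correct, and it reaches the conclusion by a genuinely different route than the paper. The paper applies the Constant Rank Theorem directly to ${\mathbf g}:{\mathbb R}^n\to{\mathbb R}^n$ (first arguing that $\rank J_{\mathbf g}$ is constantly $k$ near ${\mathbf x}$, which needs the upper bound $\rank J_{\mathbf g}\le k$ from Theorem~\ref{primal} together with lower semicontinuity), reads off the local image ${\mathbf g}(A_{\mathbf x})=H^{-1}(W_{\mathbf x}\times\{{\mathbf 0}_{n-k}\})$, shows this image is itself like ${\mathbb R}^k$, and then converts the inclusion ${\mathbf g}(A_{\mathbf x})\cap V\subseteq \mathcal{M}\cap V\cap B_{{\mathbf x}'}$ of two sets like ${\mathbb R}^k$ into relative openness via a separate comparison lemma (Lemma~\ref{inc-lemma}). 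You instead push ${\mathbf g}$ through the chart of $\mathcal{M}$ at ${\mathbf x}'$: the factorization ${\mathbf g}=\phi\circ h$ with $h=\psi\circ{\mathbf g}$ on $W_0={\mathbf g}^{-1}(U\cap S)$ is legitimate (note $h(W_0)\subseteq V$ is automatic, since ${\mathbf g}(W_0)\subseteq U\cap\mathcal{M}$ and $\psi$ restricts there to $\phi^{-1}$), the chain rule forces $\rank J_h({\mathbf x})=k$, and then only the submersion case of the Constant Rank Theorem is needed to get local openness of $h$, with relative openness of ${\mathbf g}(\Omega)$ in $\mathcal{M}$ following because $\phi$ is a homeomorphism onto $U\cap\mathcal{M}$ in the subspace topology. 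What your route buys: the bound $\rank J_h\le k$ is automatic (the codomain is ${\mathbb R}^k$), so you bypass Theorem~\ref{primal}, and you avoid both the full constant-rank normal form for ${\mathbf g}$ and the auxiliary Lemma~\ref{inc-lemma}. What the paper's route buys: an explicit local parameterization of ${\mathbf g}(A_{\mathbf x})$ and the reusable facts (the set being like ${\mathbb R}^k$, and the nesting lemma) that also serve its proof of Theorem~\ref{opposite}. Your neighborhood bookkeeping is sound as written.
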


\begin{remark} Theorem~\ref{locality} is a practically important tool because it says that local geometry of the manifold $\mathcal{M}$ around a point ${\mathbf x}' = {\mathbf g}({\mathbf x})$ is completely defined by the behavior of ${\mathbf g}$ around ${\mathbf x}$. For example, the tangent space of $\mathcal{M}$ at point ${\mathbf x}'$ equals the column space of the Jacobian matrix, i.e. ${\rm col}(J_{\mathbf g}({\mathbf x})) = {\rm span}(\frac{\partial {\mathbf g}}{\partial x_1}, \cdots, \frac{\partial {\mathbf g}}{\partial x_n})$. Also, let us mention (without a proof) that the curvature tensor at point ${\mathbf x}'$ is completely defined by matrices $J_{\mathbf g}({\mathbf x}), H_{g_1}({\mathbf x}), \cdots, H_{g_n}({\mathbf x})$. Since this relationship is quite sophisticated, in section~\ref{Curvature} we give another second-order expression that bounds a curvature along any direction at point ${\mathbf x}'$.
\end{remark}

Let us denote ${\rm codom}({\mathbf g}) = \{{\mathbf g}({\mathbf x}) \mid  {\mathbf x}\in {\mathbb R}^n, {\rm rank\,} J_{\mathbf g}({\mathbf x}) = k\}$.
Throughout the paper let us assume that any pair $(\mathcal{M}, {\mathbf g})$ satisfies $\mathcal{M} = {\rm codom}({\mathbf g})$. This implies that for any such pair we can omit the first element of the pair, $\mathcal{M} $, because it can be recovered from ${\mathbf g}$. Thus, instead of fixing the family of $k$-manifolds $\mathfrak{M}$, we assume that we are given a set $$\mathfrak{G}\subseteq \{{\mathbf g}: {\mathbb R}^n\to {\mathbb R}^n \mid {\rm codom}({\mathbf g}){\rm \,\,is\,\, a\,\, }k{\rm -manifold}\}.$$
Theorem~\ref{primal} implies that $\mathfrak{G}\subseteq \{{\mathbf g}: {\mathbb R}^n\to {\mathbb R}^n \mid {\rm rank\,} J_{\mathbf g}({\mathbf x}) \leq k\}$.

It is now natural to re-formulate the manifold learning problem as the following minimization task:
\begin{equation}\label{jac-rank}
{\mathbf g}^\ast \leftarrow \arg\min\limits_{
{\mathbf g}\in \mathfrak{G}} \frac{1}{N} \sum_{i=1}^N \|{\mathbf x}_i-{\mathbf g}({\mathbf x}_i)\|^2.
\end{equation}
\subsection{Curvature}\label{Curvature}
One of key aspects of the formulation~\eqref{jac-rank} is that $\mathcal{M}$ is not present there directly, though after solving the task, it can be recovered by setting $\mathcal{M} = {\mathbf g}^\ast ({\mathbb R}^n)$.

Recall that the column space of $J_{{\mathbf g}}({\mathbf x}^\ast)$ is exactly the tangent space of $\mathcal{M} = {\mathbf g} ({\mathbb R}^n)$ at point ${\mathbf g}({\mathbf x}^\ast)$. Thus, the stability of $J_{{\mathbf g}}$ with respect to small perturbations of the argument ${\mathbf x}^\ast$ is equivalent to the stability of the tangent space, or, in other words, to the low curvature of the manifold $\mathcal{M}$ at point ${\mathbf g}({\mathbf x}^\ast)$. Thus, how strongly the manifold $\mathcal{M}$ deviates from the tangent hyperplane at point ${\mathbf g}({\mathbf x}^\ast)$ can be measured by the function:
$$
\kappa_1 ({\mathbf x}^\ast, \sigma) = n^{-2} {\mathbb E}_{{\boldsymbol \epsilon}\sim {\mathcal N}(0, \sigma^2 I_n)} \|J_{{\mathbf g}}({\mathbf x}^\ast+{\boldsymbol \epsilon})-J_{{\mathbf g}}({\mathbf x}^\ast)\|^2.
$$
The latter definition is motivated by the approach of~\cite{Rifai} who assumed ${\mathbf g} = {\mathbf d}\circ {\mathbf e}$, where ${\mathbf e}: {\mathbb R}^n\to {\mathbb R}^d$ is an encoder mapping and ${\mathbf d}: {\mathbb R}^d\to {\mathbb R}^n$ is a decoder mapping. In this approach the deviation from the tangent hyperplane at point ${\mathbf g}({\mathbf x}^\ast)$ was measured by the function:
$$
\kappa_0 ({\mathbf x}^\ast, \sigma) =  {\mathbb E}_{{\boldsymbol \epsilon}\sim {\mathcal N}(0, \sigma^2 I_n)} \|J_{{\mathbf e}}({\mathbf x}^\ast+{\boldsymbol \epsilon})-J_{{\mathbf e}}({\mathbf x}^\ast)\|^2.
$$

Let us introduce another way of measuring the deviation which is motivated by the classical notion of the manifold curvature.
Let $\mathcal{M}$ be a $k$-manifold and ${\mathbf x}\in \mathcal{M}$. Then, the first principal curvature of $\mathcal{M}$ at point ${\mathbf x}$ is:
\begin{equation}
\kappa ({\mathbf x}) = \sup_{\tiny \begin{matrix}
\boldsymbol{\gamma}: [-1,1]\to \mathcal{M}, \boldsymbol{\gamma}(0) = {\mathbf x},\|\boldsymbol{\gamma}'(s)\|=1, \\
\gamma {\rm \,\,is\,\, geodesic}
\end{matrix} } \|\boldsymbol{\gamma}''(0)\|,
\end{equation}
where the supremum is taken over all smooth geodesic curves on $\mathcal{M}$ that go through ${\mathbf x}$ given with the arc-length parametrization. 

We have the following upper bound on the first principal curvature.

\begin{theorem}\label{bound} Let  $\mathcal{M}\subseteq {\mathbb R}^n$ be a $k$-manifold, ${\mathbf g}: {\mathbb R}^n \rightarrow \mathcal{M}$ be a twice continuously differentiable mapping and ${\mathbf g}({\mathbf x}') = {\mathbf x}$, ${\rm rank\,}J_{{\mathbf g}}({\mathbf x}') = k$. Then,
\begin{equation}
\kappa ({\mathbf x}) \leq \overline{\lim}_{\boldsymbol{\varepsilon}\to {\mathbf 0}}\min_{\lambda}\frac{\|[\boldsymbol{\varepsilon}^T H_{g_i}({\mathbf x}') \boldsymbol{\varepsilon}]_{i=1}^n\hspace{-1pt}-\hspace{-1pt}\lambda J_{{\mathbf g}} ({\mathbf x}')\boldsymbol{\varepsilon}\|}{\|J_{{\mathbf g}}\boldsymbol{\varepsilon}\|^2}.
\end{equation}
\end{theorem}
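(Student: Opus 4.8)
The plan is to reduce the statement to a local computation near $\mathbf{x}'$, using Theorem~\ref{locality} to pass from the manifold $\mathcal{M}$ to the image of a small neighbourhood under $\mathbf{g}$. Since $\operatorname{rank} J_{\mathbf{g}}(\mathbf{x}') = k$, Theorem~\ref{locality} gives open sets $\Omega \ni \mathbf{x}'$ and $\Omega' \ni \mathbf{x}$ with $\mathcal{M}\cap\Omega' = \mathbf{g}(\Omega)$, so every geodesic $\boldsymbol{\gamma}$ through $\mathbf{x}$ can, near $s=0$, be written as $\boldsymbol{\gamma}(s) = \mathbf{g}(\boldsymbol{\eta}(s))$ for a curve $\boldsymbol{\eta}$ in $\Omega$ with $\boldsymbol{\eta}(0)=\mathbf{x}'$. (One gets $\boldsymbol{\eta}$ by composing $\boldsymbol{\gamma}$ with a local inverse of $\mathbf{g}$ restricted to a $k$-dimensional slice transverse to $\ker J_{\mathbf{g}}(\mathbf{x}')$; the Constant Rank Theorem machinery underlying Theorem~\ref{opposite} guarantees such a smooth local inverse.)

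Next I would Taylor-expand to second order. Writing $\boldsymbol{\varepsilon} = \boldsymbol{\eta}'(0)$, the chain rule gives $\boldsymbol{\gamma}'(0) = J_{\mathbf{g}}(\mathbf{x}')\boldsymbol{\varepsilon}$ and, differentiating once more,
\[
\boldsymbol{\gamma}''(0) = \big[\boldsymbol{\varepsilon}^T H_{g_i}(\mathbf{x}')\boldsymbol{\varepsilon}\big]_{i=1}^n + J_{\mathbf{g}}(\mathbf{x}')\boldsymbol{\eta}''(0).
\]
Because $\boldsymbol{\gamma}$ is a geodesic parametrized by arc length, $\boldsymbol{\gamma}''(0)$ is the normal curvature vector and $\|\boldsymbol{\gamma}''(0)\|$ is exactly what $\kappa(\mathbf{x})$ takes a supremum of; moreover $\|\boldsymbol{\gamma}'(0)\| = \|J_{\mathbf{g}}(\mathbf{x}')\boldsymbol{\varepsilon}\| = 1$. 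The key point is that $J_{\mathbf{g}}(\mathbf{x}')\boldsymbol{\eta}''(0)$ ranges over the whole column space (tangent space) as $\boldsymbol{\eta}$ varies among curves with the prescribed velocity, so it can be anything of the form $\lambda$ times a tangent vector — and in fact we only need to compare against the one-parameter family $\lambda J_{\mathbf{g}}(\mathbf{x}')\boldsymbol{\varepsilon}$ to get an \emph{upper} bound, since $\min_\lambda$ over that restricted family is at least $\|\boldsymbol{\gamma}''(0)\|$ is false in general — rather, $\|\boldsymbol{\gamma}''(0)\| \le \|[\boldsymbol{\varepsilon}^T H_{g_i}\boldsymbol{\varepsilon}]_i - J_{\mathbf{g}}\boldsymbol{\eta}''(0)\|$ is an equality, and replacing the free vector $J_{\mathbf{g}}\boldsymbol{\eta}''(0)$ by the specific choice $\lambda J_{\mathbf{g}}\boldsymbol{\varepsilon}$ only increases the right-hand side. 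Thus for each such $\boldsymbol{\varepsilon}$,
\[
\|\boldsymbol{\gamma}''(0)\| \le \min_\lambda \big\|[\boldsymbol{\varepsilon}^T H_{g_i}(\mathbf{x}')\boldsymbol{\varepsilon}]_{i=1}^n - \lambda J_{\mathbf{g}}(\mathbf{x}')\boldsymbol{\varepsilon}\big\|.
\]
Dividing by $\|J_{\mathbf{g}}(\mathbf{x}')\boldsymbol{\varepsilon}\|^2 = 1$ and then taking the supremum over all admissible geodesics — equivalently, letting $\boldsymbol{\varepsilon}$ approach $\mathbf{0}$ after rescaling, which is what the $\overline{\lim}_{\boldsymbol{\varepsilon}\to\mathbf{0}}$ of the homogeneous-degree-zero ratio encodes — yields the claimed inequality.

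The main obstacle is the rigorous justification that \emph{every} arc-length geodesic $\boldsymbol{\gamma}$ through $\mathbf{x}$ arises (locally) as $\mathbf{g}\circ\boldsymbol{\eta}$ with a $C^2$ curve $\boldsymbol{\eta}$, and that as $\boldsymbol{\gamma}$ ranges over all such geodesics the velocities $J_{\mathbf{g}}(\mathbf{x}')\boldsymbol{\varepsilon}$ realize every unit tangent vector. This is where the Constant Rank Theorem is genuinely needed: it provides local coordinates in which $\mathbf{g}$ looks like a linear projection followed by an embedding, so a local smooth section of $\mathbf{g}$ exists on a $k$-dimensional transversal, and pulling $\boldsymbol{\gamma}$ back through it gives the required $\boldsymbol{\eta}$ with $J_{\mathbf{g}}(\mathbf{x}')\boldsymbol{\eta}'(0) = \boldsymbol{\gamma}'(0)$. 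A secondary technical point is matching the $\overline{\lim}$ notation to the supremum over geodesics: since both the numerator $\|[\boldsymbol{\varepsilon}^T H_{g_i}\boldsymbol{\varepsilon}]_i - \lambda J_{\mathbf{g}}\boldsymbol{\varepsilon}\|$ (after absorbing the scale of $\boldsymbol{\varepsilon}$ into $\lambda$) and the denominator $\|J_{\mathbf{g}}\boldsymbol{\varepsilon}\|^2$ are homogeneous of degree two in $\boldsymbol{\varepsilon}$, the ratio is scale-invariant, so the $\limsup$ as $\boldsymbol{\varepsilon}\to\mathbf{0}$ along sequences equals the supremum of the ratio over all nonzero $\boldsymbol{\varepsilon}$ in the directions not killed by $J_{\mathbf{g}}(\mathbf{x}')$, which is precisely the supremum over unit-speed geodesic directions. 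I would close by remarking that the $\min_\lambda$ can be computed explicitly — it is the norm of the component of $[\boldsymbol{\varepsilon}^T H_{g_i}\boldsymbol{\varepsilon}]_i$ orthogonal to $J_{\mathbf{g}}(\mathbf{x}')\boldsymbol{\varepsilon}$ — but that identity is not needed for the inequality itself.
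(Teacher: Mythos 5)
Your route is genuinely different from the paper's and can be made rigorous, but as written its decisive step is asserted rather than proved. After the chain-rule identity $\boldsymbol{\gamma}''(0)=[\boldsymbol{\varepsilon}^T H_{g_i}({\mathbf x}')\boldsymbol{\varepsilon}]_{i=1}^n+J_{\mathbf g}({\mathbf x}')\boldsymbol{\eta}''(0)$, your justification that ``replacing $J_{\mathbf g}\boldsymbol{\eta}''(0)$ by $\lambda J_{\mathbf g}\boldsymbol{\varepsilon}$ only increases the right-hand side'' is not true for an arbitrary vector sitting in that slot, and the sentence where the reason should appear is self-contradictory (it also has a sign slip between $+J_{\mathbf g}\boldsymbol{\eta}''$ and $-J_{\mathbf g}\boldsymbol{\eta}''$). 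What makes the step true is precisely the orthogonal decomposition you state the ingredients for but never assemble: since $\boldsymbol{\gamma}$ is a unit-speed geodesic, $\boldsymbol{\gamma}''(0)$ is orthogonal to $T_{\mathbf x}\mathcal{M}=\operatorname{col} J_{\mathbf g}({\mathbf x}')$, while both $J_{\mathbf g}\boldsymbol{\eta}''(0)$ and $\lambda J_{\mathbf g}\boldsymbol{\varepsilon}$ lie in $T_{\mathbf x}\mathcal{M}$; hence $\|[\boldsymbol{\varepsilon}^T H_{g_i}\boldsymbol{\varepsilon}]_{i=1}^n-\lambda J_{\mathbf g}\boldsymbol{\varepsilon}\|^2=\|\boldsymbol{\gamma}''(0)\|^2+\|J_{\mathbf g}\boldsymbol{\eta}''(0)+\lambda J_{\mathbf g}\boldsymbol{\varepsilon}\|^2\ge\|\boldsymbol{\gamma}''(0)\|^2$ for every $\lambda$. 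In other words, $\|\boldsymbol{\gamma}''(0)\|$ is the norm of the normal component of $[\boldsymbol{\varepsilon}^T H_{g_i}\boldsymbol{\varepsilon}]_{i=1}^n$, which is at most the norm of its component orthogonal to the single tangent vector $J_{\mathbf g}\boldsymbol{\varepsilon}$, i.e.\ $\min_\lambda\|[\boldsymbol{\varepsilon}^T H_{g_i}\boldsymbol{\varepsilon}]_{i=1}^n-\lambda J_{\mathbf g}\boldsymbol{\varepsilon}\|$. With that line inserted, together with an explicit local $C^2$ section $s$ of ${\mathbf g}$ with $s({\mathbf x})={\mathbf x}'$ (so $\boldsymbol{\eta}=s\circ\boldsymbol{\gamma}$) and the easy observation that, by scale invariance, the supremum over fixed directions is at most $\overline{\lim}_{\boldsymbol{\varepsilon}\to{\mathbf 0}}$ (equality is not needed), your argument closes. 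Also note a hypothesis mismatch: Theorem~\ref{locality} and the Constant Rank Theorem are stated for infinitely differentiable ${\mathbf g}$, whereas Theorem~\ref{bound} assumes only $C^2$; your route needs the $C^2$ version of the constant rank theorem.

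For comparison, the paper never pulls geodesics back through ${\mathbf g}$. It works with the explicit curves $\boldsymbol{\gamma}_{\mathbf u}(t)={\mathbf g}({\mathbf x}'+t{\mathbf u})$, for which $\boldsymbol{\gamma}_{\mathbf u}''(0)=[{\mathbf u}^T H_{g_i}({\mathbf x}'){\mathbf u}]_{i=1}^n$ with no $\boldsymbol{\eta}''$ term, bounds $\kappa({\mathbf x})$ by $\sup_{\mathbf u}\|\boldsymbol{\gamma}_{\mathbf u}'(0)\wedge\boldsymbol{\gamma}_{\mathbf u}''(0)\|/\|\boldsymbol{\gamma}_{\mathbf u}'(0)\|^3$ using the fact that a geodesic has the smallest curvature among curves on $\mathcal{M}$ with the same tangent direction at ${\mathbf x}$, and then converts the wedge expression into the $\min_\lambda$ form via the Lagrange identity $\|{\mathbf a}\wedge{\mathbf b}\|=\|{\mathbf a}\|\min_\lambda\|{\mathbf b}-\lambda{\mathbf a}\|$. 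That route trades your projection argument for the ``geodesics minimize curvature'' fact and avoids the constant rank theorem (and hence the smoothness issue) altogether, at the cost of using only the curves generated by straight lines in the domain; your route makes the second-fundamental-form/normality mechanism explicit. Both yield the same bound.
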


Since ${\mathbf g} ({\mathbf x}'+\boldsymbol{\varepsilon}) - {\mathbf g}({\mathbf x}') - J_{{\mathbf g}} ({\mathbf x}')\boldsymbol{\varepsilon} \approx \frac{1}{2}[\boldsymbol{\varepsilon}^T H_{g_i}({\mathbf x}') \boldsymbol{\varepsilon}]_{i=1}^n$, the latter theorem motivates the following term to regularize the curvature:
$$
\kappa_2 ({\mathbf x}^\ast, \sigma) =  {\mathbb E}_{\boldsymbol{\varepsilon}\sim {\mathcal N}(0, \sigma^2 I_n)} \frac{2\|{\mathbf g} ({\mathbf x}^\ast\hspace{-1.5pt}+\hspace{-1.5pt}\boldsymbol{\varepsilon})\hspace{-1.5pt}-\hspace{-1.5pt}{\mathbf g}({\mathbf x}^\ast)\hspace{-1.5pt}-\hspace{-1.5pt}J_{{\mathbf g}} ({\mathbf x}^\ast)\boldsymbol{\varepsilon}\|}{\|J_{{\mathbf g}}({\mathbf x}^\ast)\boldsymbol{\varepsilon}\|^2}.
$$

One of goals of the paper is to compare these three ways to regularize a manifold's curvature, $\kappa_0$, $\kappa_1$ and $\kappa_2$. If ${\mathbf g} = {\mathbf d}\circ {\mathbf e}$, then $\kappa_0$ depends only on the encoder mapping, ${\mathbf e}$, which is simple and computationally efficient. The following theorem describes a case when the encoder itself defines the curvature to a large extent.

\begin{theorem}\label{encoder-decoder}
Let ${\mathbf g} = {\mathbf d}\circ {\mathbf e}$, where ${\mathbf e}: {\mathbb R}^n\to {\mathbb R}^k$ and ${\mathbf d}: {\mathbb R}^k\to {\mathbb R}^n$ are smooth and $\sigma_1\leq \cdots\leq \sigma_k<0$ are singular values of $J_{{\mathbf d}}({\mathbf x}')$. Then,
\begin{equation*}
\begin{split}
\overline{\lim}_{\boldsymbol{\varepsilon}\to {\mathbf 0}}\min_{\lambda}\frac{\|[\boldsymbol{\varepsilon}^T H_{g_i}({\mathbf x}') \boldsymbol{\varepsilon}]_{i=1}^n-\lambda J_{{\mathbf g}} ({\mathbf x}')\boldsymbol{\varepsilon}\|}{\|J_{{\mathbf g}}\boldsymbol{\varepsilon}\|^2}\leq 
\frac{\sigma_1 C({\mathbf e})}{\sigma^2_k} + \frac{\sqrt{\sum_{i=1}^n \|H_{d_i}\|_F^2 }}{\sigma^2_k},
\end{split}
\end{equation*}
where $C({\mathbf e}) = \overline{\lim}_{\boldsymbol{\varepsilon}\to {\mathbf 0}}\min_{\lambda} \frac{\|[\boldsymbol{\varepsilon}^T H_{e_i}({\mathbf x}') \boldsymbol{\varepsilon}]_{i=1}^n-\lambda J_{{\mathbf e}} ({\mathbf x}')\boldsymbol{\varepsilon}\|}{\|J_{{\mathbf e}}\boldsymbol{\varepsilon}\|^2}$.
\end{theorem}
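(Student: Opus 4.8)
The plan is to push the Jacobian and Hessian of ${\mathbf g}={\mathbf d}\circ{\mathbf e}$ back to those of the encoder and decoder via the first- and second-order chain rules, and then to split the numerator on the left with a single triangle inequality into a pure ``decoder curvature'' term and a ``pushed-forward encoder curvature'' term; the singular values of $J_{\mathbf d}$ are exactly what converts between the norm on ${\mathbb R}^k$ in which $C({\mathbf e})$ lives and the ambient norm on the range of ${\mathbf g}$.

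First I would record the chain-rule identities at ${\mathbf x}'$. Put $B=J_{\mathbf d}({\mathbf e}({\mathbf x}'))$, $A=J_{\mathbf e}({\mathbf x}')$, $\boldsymbol{\delta}=A\boldsymbol{\varepsilon}$, and ${\mathbf p}=[\boldsymbol{\varepsilon}^TH_{e_j}({\mathbf x}')\boldsymbol{\varepsilon}]_{j=1}^k$. Then $J_{\mathbf g}({\mathbf x}')=BA$, so $J_{\mathbf g}({\mathbf x}')\boldsymbol{\varepsilon}=B\boldsymbol{\delta}$, and differentiating $g_i=d_i\circ{\mathbf e}$ twice gives $H_{g_i}({\mathbf x}')=A^TH_{d_i}({\mathbf e}({\mathbf x}'))A+\sum_{j=1}^k\frac{\partial d_i}{\partial y_j}({\mathbf e}({\mathbf x}'))H_{e_j}({\mathbf x}')$. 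Contracting with $\boldsymbol{\varepsilon}$ on both sides and stacking the $n$ coordinates yields the key decomposition
$$
[\boldsymbol{\varepsilon}^TH_{g_i}({\mathbf x}')\boldsymbol{\varepsilon}]_{i=1}^n=[\boldsymbol{\delta}^TH_{d_i}({\mathbf e}({\mathbf x}'))\boldsymbol{\delta}]_{i=1}^n+B{\mathbf p}.
$$

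Next I would observe that the ratio under $\overline{\lim}_{\boldsymbol{\varepsilon}\to{\mathbf 0}}$ on the left, and likewise the ratio defining $C({\mathbf e})$, is invariant under $(\boldsymbol{\varepsilon},\lambda)\mapsto(t\boldsymbol{\varepsilon},t\lambda)$; hence each $\overline{\lim}$ equals the supremum of the corresponding ratio over directions $\boldsymbol{\varepsilon}$ with nonzero denominator, so it suffices to prove the inequality for one such $\boldsymbol{\varepsilon}$ and then pass to the $\overline{\lim}$ (the right-hand side being constant), the case $C({\mathbf e})=+\infty$ being trivial. So fix $\boldsymbol{\varepsilon}$ with $B\boldsymbol{\delta}\ne{\mathbf 0}$ and let $\lambda^\ast$ minimise $\lambda\mapsto\|{\mathbf p}-\lambda\boldsymbol{\delta}\|$. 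Using that $\min_\lambda$ of the left-hand ratio is at most its value at $\lambda^\ast$, the displayed identity together with $J_{\mathbf g}({\mathbf x}')\boldsymbol{\varepsilon}=B\boldsymbol{\delta}$, and the triangle inequality, I get
$$
\min_{\lambda}\frac{\|[\boldsymbol{\varepsilon}^TH_{g_i}({\mathbf x}')\boldsymbol{\varepsilon}]_{i=1}^n-\lambda J_{\mathbf g}({\mathbf x}')\boldsymbol{\varepsilon}\|}{\|J_{\mathbf g}({\mathbf x}')\boldsymbol{\varepsilon}\|^2}\le\frac{\|[\boldsymbol{\delta}^TH_{d_i}({\mathbf e}({\mathbf x}'))\boldsymbol{\delta}]_{i=1}^n\|}{\|B\boldsymbol{\delta}\|^2}+\frac{\|B({\mathbf p}-\lambda^\ast\boldsymbol{\delta})\|}{\|B\boldsymbol{\delta}\|^2}.
$$

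Finally I would estimate the two terms via the singular values of $B$ — writing $\sigma_1$ for the largest and $\sigma_k$ for the smallest, both positive because $\rank J_{\mathbf g}({\mathbf x}')=k$ forces $\rank B=k$, so that $\|B{\mathbf v}\|\le\sigma_1\|{\mathbf v}\|$ for all ${\mathbf v}$ and $\|B\boldsymbol{\delta}\|\ge\sigma_k\|\boldsymbol{\delta}\|$. For the first term, $|\boldsymbol{\delta}^TH_{d_i}\boldsymbol{\delta}|\le\|H_{d_i}\|_F\|\boldsymbol{\delta}\|^2$ gives $\|[\boldsymbol{\delta}^TH_{d_i}\boldsymbol{\delta}]_{i=1}^n\|\le\big(\sum_{i=1}^n\|H_{d_i}\|_F^2\big)^{1/2}\|\boldsymbol{\delta}\|^2$, which after dividing by $\|B\boldsymbol{\delta}\|^2\ge\sigma_k^2\|\boldsymbol{\delta}\|^2$ is at most $\sigma_k^{-2}\sqrt{\sum_{i=1}^n\|H_{d_i}\|_F^2}$. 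For the second term,
$$
\frac{\|B({\mathbf p}-\lambda^\ast\boldsymbol{\delta})\|}{\|B\boldsymbol{\delta}\|^2}\le\frac{\sigma_1}{\sigma_k^2}\cdot\frac{\|{\mathbf p}-\lambda^\ast\boldsymbol{\delta}\|}{\|\boldsymbol{\delta}\|^2}=\frac{\sigma_1}{\sigma_k^2}\min_{\lambda}\frac{\|[\boldsymbol{\varepsilon}^TH_{e_j}({\mathbf x}')\boldsymbol{\varepsilon}]_{j=1}^k-\lambda J_{\mathbf e}({\mathbf x}')\boldsymbol{\varepsilon}\|}{\|J_{\mathbf e}({\mathbf x}')\boldsymbol{\varepsilon}\|^2}\le\frac{\sigma_1}{\sigma_k^2}C({\mathbf e}),
$$
the last step being the scale-invariance noted above. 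Adding the two estimates and taking $\overline{\lim}_{\boldsymbol{\varepsilon}\to{\mathbf 0}}$ gives the theorem. I expect the only delicate point to be carrying out the second-order chain rule cleanly — in particular obtaining the exact additive separation in the displayed identity into the $A^TH_{d_i}A$ part and the $\sum_j(\partial d_i/\partial y_j)H_{e_j}$ part — and then choosing $\lambda^\ast$ as the minimiser of the encoder subproblem so that the encoder contribution to the bound is literally the quantity defining $C({\mathbf e})$; everything else is bookkeeping with operator- and Frobenius-norm inequalities.
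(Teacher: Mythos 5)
Your proof is correct and follows essentially the same route as the paper's: the second-order chain rule giving $[\boldsymbol{\varepsilon}^T H_{g_i}\boldsymbol{\varepsilon}]_{i=1}^n = J_{\mathbf d}[\boldsymbol{\varepsilon}^T H_{e_j}\boldsymbol{\varepsilon}]_{j=1}^k + [\boldsymbol{\delta}^T H_{d_i}\boldsymbol{\delta}]_{i=1}^n$, evaluation at the encoder-optimal $\lambda^\ast$ plus the triangle inequality, the bounds $\|J_{\mathbf d}{\mathbf v}\|\le\sigma_1\|{\mathbf v}\|$, $\|J_{\mathbf d}\boldsymbol{\delta}\|\ge\sigma_k\|\boldsymbol{\delta}\|$, and the Frobenius estimate on the decoder Hessian term. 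The only cosmetic difference is that the paper encodes $\min_\lambda\|[\boldsymbol{\varepsilon}^T H_{e_j}\boldsymbol{\varepsilon}]_{j=1}^k-\lambda J_{\mathbf e}\boldsymbol{\varepsilon}\|$ via a wedge-product identity while you invoke scale-invariance to compare directly with $C({\mathbf e})$, which is an equally valid bookkeeping choice.
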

This theorem shows that when the decoder has a well-defined Jacobian matrix and a small Hessian matrix, i.e. $\frac{\sigma_1}{\sigma^2_k}$ is moderate and $\|H_{d_i}\|_F/\sigma^2_k\approx 0$, then the curvature can be bounded by $\mathcal{O}\big(C({\mathbf e})\big)$ which depends only on the encoder mapping. 

Our experiments show that when the decoder is non-linear (a 2-layer neural network), then regularization terms $\kappa_1, \kappa_2$ are preferable to $\kappa_0$. We explain this observation by the fact that during the training process the norm of decoder's Hessian can be comparable to $\sigma_k^2$. Thus, one cannot neglect decoder's effect on the curvature of a manifold.

\subsection{Two approaches to force rank constraints}\label{softly}
{\bf The first approach: the autoencoder architecture.} We search for the correcting function in the form of an autoencoder, i.e. 
$$
{\mathbf g}_\theta({\mathbf x}) = {\mathbf d}_\theta({\mathbf e}_\theta({\mathbf x})),
$$
where ${\mathbf e}_\theta: {\mathbb R}^n\rightarrow {\mathbb R}^d$ and ${\mathbf d}_\theta: {\mathbb R}^d\rightarrow {\mathbb R}^n$ are feedforward neural networks and $d\geq k$. Note that this architecture satisfies the condition $ \forall {\mathbf x}: {\rm rank\,} J_{{\mathbf g}_\theta}({\mathbf x}) \leq d$ by construction. 

{\bf The second approach: soft forcing.} In the optimization task~\eqref{jac-rank} we have to search over a space of functions ${\mathbf g}: {\mathbb R}^n\rightarrow {\mathbb R}^n$ whose Jacobian satisfies ${\rm rank\,} J_{{\mathbf g}}({\mathbf x}) \leq k$. 

\begin{definition} 
Let $\sigma_1 (A)\geq \sigma_2 (A)\geq \cdots \geq \sigma_{\min \{a,b\}} (A)$ be ordered singular values of the matrix $A\in {\mathbb R}^{a\times b}$ (counting multiplicities). Then, $\|A\|^{(k)} \eqdef \sqrt{\sum_{i=k+1}^{\min \{a,b\}}\sigma^2_i (A)}$ and is called the Ky-Fan $k$-antinorm. Also, let us denote $\|A\|_k \eqdef \big(\|A\|^{(k)}\big)^2$.
\end{definition}

To relax the constraint ${\rm rank\,}J_{\mathbf g}\leq k$ we substitute it with an additional penalty term $\lambda T({\mathbf g})$, in which 
$$
T({\mathbf g}) = \frac{1}{M}\sum_{j=1}^{M} \|J_{{\mathbf g}}({\mathbf x}_{i_j})\|_k.
$$
It is easy to see that $\|A\|_k \geq 0$ and $\|A\|_k = 0$ if and only if ${\rm rank}\, A = k$. Thus, the term $\lambda T({\mathbf g})$ forces the Jacobian condition:  ${\rm rank\,} J_{{\mathbf g}}({\mathbf x}_{i_j}) \leq k$. 
A set $\{{\mathbf x}_{i_1}, \cdots, {\mathbf x}_{i_M}\}$ is a subset of data points at which we force the Jacobian to be of rank not greater than $k$. Below we describe an algorithm that requires ${\mathcal O}(M n^3)$ time and ${\mathcal O}(M n^2)$ space which is quite limiting for $n\sim 10^3$. Therefore, in the objective we sum $\|J_{\mathbf g} ({\mathbf x}_{i})\|_k$ over a certain subset of $\{{\mathbf x}_{1}, \cdots, {\mathbf x}_{N}\}$ of size $M \ll N$, rather than over the whole dataset.

Our final penalty formulation is equivalent to the minimization of the following objective:
\begin{equation}\label{penalty}
\begin{split}
\Phi({\mathbf g}) = \frac{1}{N}\sum_{i=1}^N \big\{\|{\mathbf x}_i-{\mathbf g}({\mathbf x}_i)\|^2 + 
\gamma  \kappa_o({\mathbf x}_i)\big\} + 
\frac{\lambda}{M} \sum_{j=1}^{M} \|J_{{\mathbf g}}({\mathbf x}_{i_j})\|_k ,
\end{split}
\end{equation}
where $o=0,1$ or $2$, $\gamma$ is a curvature regularization parameter and $\lambda$ is a soft rank regularization parameter.

{\bf Joining two approaches.} Thus, the parameter $k$ should be chosen to be smaller than $d$, and the term $\|J_{{\mathbf g}_\theta}({\mathbf x}_{i_j})\|_k $ can be considered as the term that reduces of the dimension of the manifold additional to the reduction that is defined by the autoencoder's architecture.
In other words, we search for ${\mathbf f} ={\mathbf g}_{\theta^\ast} $ where
\begin{equation}
\theta^\ast \leftarrow \arg\min\limits_{\theta\in \Theta} \Phi ({\mathbf g}_\theta).
\end{equation}

\section{The alternating algorithm}\label{a-algoritm}
The Eckart-Young-Mirsky theorem gives
$$
\|A\|_k = \min_{B: {\rm rank\,}B\leq k}\|A-B\|_F^2,
$$
where $\|\cdot\|_F$ is the Frobenius distance. Thus, our goal is to minimize the objective
\begin{equation}\label{penalty}
\begin{split}
F(\theta, \langle B_j\rangle_{j=1}^M ) =  \sum_{i=1}^N \big\{\|{\mathbf x}_i-{\mathbf g}_\theta({\mathbf x}_i)\|^2 + 
\gamma  \kappa_o({\mathbf x}_i)\big\} + 
\lambda \sum_{j=1}^{M} \|J_{{\mathbf g}_\theta}({\mathbf x}_{i_j})-B_{{\mathbf x}_{i_j}}\|^2_F 
\end{split}
\end{equation}
simultaneously over the parameter $\theta$ and over matrices $B_{{\mathbf x}_{i_j}}$ such that ${\rm rank\,}B_{{\mathbf x}_{i_j}}\leq k$, $j=\overline{1,M}$.
Note that for fixed $\theta$, the minimization over $\{B_{{\mathbf x}_{i_j}}\}_{j=1}^M$ is equivalent to setting 
$$
B_{{\mathbf x}_{i_j}} \leftarrow U^j_{1:n, 1:k}\Sigma^j_{1:k, 1:k} (V^j_{1:n, 1:k})^T,
$$
where $J_{{\mathbf g}_\theta}({\mathbf x}_{i_j}) = U^j\Sigma^j (V^j)^T$ is the singular value decomposition of $J_{{\mathbf g}_\theta}({\mathbf x}_{i_j})$ (this also follows from the Eckart-Young-Mirsky theorem). For fixed matrices $B_{{\mathbf x}_{i_j}}$, $j=\overline{1,M}$, we can minimize over $\theta$ using any standard gradient technique, e.g. the Adam optimizer. Thus, it is natural to minimize our function in an alternating fashion, i.e. first optimize over $\theta$, then over $B_{{\mathbf x}_{i_j}}$, $j=\overline{1,M}$, then over $\theta$  etc., until our algorithm converges. We call this approach {\em the alternating algorithm} and its pseudocode~\ref{alternate} is given below. 


\begin{algorithm}
\begin{algorithmic}\scriptsize
\caption{The alternating algorithm. Hyperparameters: $m,\lambda, \gamma, \sigma, \alpha, \beta_1, \beta_2$}\label{alternate}
\FOR{$j = 1, \cdots, M$} 
\STATE $B_{{\mathbf x}_{i_j}} \longleftarrow {\mathbf 0}$
\ENDFOR
\FOR{$t = 1, \cdots, T$}

\WHILE{$\theta$ has not converged}

\STATE Sample $\{{\mathbf y}_i\}_{i=1}^m \sim P_{{\rm data}({\mathbf x}_{1}, \cdots, {\mathbf x}_{N})}$
\STATE Sample $\{\text{\boldmath$\epsilon$}_i\}_{i=1}^m \sim {\mathcal N}(0, \sigma^2)$
\STATE Sample $\{{\mathbf z}_i\}_{i=1}^m \sim P_{{\rm data}({\mathbf x}_{i_1}, \cdots, {\mathbf x}_{i_M})}$
\STATE $L\longleftarrow \frac{1}{m}\sum_{i=1}^m ({\mathbf y_i}- {\mathbf g}_\theta ({\mathbf y_i}))^2 + \frac{\gamma}{m}\sum_{i=1}^{m} \|J_{{\mathbf g}_\theta} ({\mathbf y_i}+\text{\boldmath$\epsilon$}_i) - J_{{\mathbf g}_\theta} ({\mathbf y_i})\|^2 + \frac{\lambda}{m}\sum_{i=1}^{m} \|J_{{\mathbf g}_\theta} ({\mathbf z_i}) - B_{{\mathbf z_i}}\|^2$ (the case of $\kappa_1$)
\STATE $L\longleftarrow \frac{1}{m}\sum_{i=1}^m ({\mathbf y_i}- {\mathbf g}_\theta ({\mathbf y_i}))^2 + \frac{\gamma}{m}\sum_{i=1}^{m} \frac{\|{\mathbf g}_\theta ({\mathbf y_i}+\text{\boldmath$\epsilon$}_i) - {\mathbf g}_\theta ({\mathbf y_i}) - J_{{\mathbf g}_\theta} ({\mathbf y_i})\text{\boldmath$\epsilon$}_i\|^2}{\|J_{{\mathbf g}_\theta} ({\mathbf y_i})\text{\boldmath$\epsilon$}_i\|^2} + \frac{\lambda}{m}\sum_{i=1}^{m} \|J_{{\mathbf g}_\theta} ({\mathbf z_i}) - B_{{\mathbf z_i}}\|^2$ (the case of $\kappa_2$)
\STATE $\theta \longleftarrow {\rm Adam} (\nabla_{\theta} L, \theta, \alpha, \beta_1, \beta_2)$

\ENDWHILE

\STATE $\theta_t \longleftarrow \theta$

\FOR{$j = 1, \cdots, M$} 
\STATE $U^j\Sigma^j(V^j)^T \longleftarrow {\rm SVD}(J_{{\mathbf g}_{\theta_t}}({\mathbf x}_{i_j}))$
\STATE $B_{{\mathbf x}_{i_j}} \longleftarrow U^j_{1:n, 1:k}\Sigma^j_{1:k, 1:k} (V^j_{1:n, 1:k})^T$
\ENDFOR
\ENDFOR
\STATE \textbf{Output:} ${\mathbf f}\longleftarrow {\mathbf g}_{\theta_T}$
\end{algorithmic}
\end{algorithm}

\begin{remark} An analysis of the computational complexity of the algorithm~\ref{alternate} is an open problem even for the simplest architecture of an autoencoder ${\mathbf g}_{W}({\mathbf x})=W W^T {\mathbf x}, W\in {\mathbb R}^{n\times d}$. In all our experiments $T=1000$ was enough for the algorithm to converge. The computation of ${\rm SVD}(J_{{\mathbf g}_{\theta_t}}({\mathbf x}_{i_j}))$ is a time-consuming part of the algorithm. Since ${\rm SVD}$ requires a cubic time to compute, each iteration of our algorithm requires at least $\mathcal{O}(Mn^3)$ flops. In our implementation we exploited the fact $J_{{\mathbf d}\circ  {\mathbf e}} ({\mathbf x})= J_{{\mathbf d}}({\mathbf e}({\mathbf x})) J_{{\mathbf e}}({\mathbf x})$ and a standard reduction of the SVD of the product to a combination of QR decomposition and SVD of multiples~\cite{SVD_of_product}. This substantially accelerates the algorithm when $d\ll n$, though poor scalability for $d>1000, n>3000$ maintains.
\end{remark}
\section{Experiments}\label{experiments}
We conduct the experiments with the algorithm~\ref{alternate} that we apply to unsupervised synthetic data and the following datasets: Higgs~\cite{higgs} and Forest Cover Type~\cite{forest}, MNIST~\cite{mnist}, CIFAR-10~\cite{krizhevsky2009learning}, SVHN~\cite{YuvalNetzer}, STL-10~\cite{pmlr-v15-coates11a}.

\subsection{Experiments on synthetic data}
We select natural numbers $n_2 < n_1$ and define $\mathcal{M} = {\rm St}(n_1, n_2)$ where ${\rm St}(n_1, n_2)$ is a Stiefel manifold, i.e. $\mathcal{M} = \{X\in {\mathbb R}^{n_1\times n_2}| X^TX=I_{n_2}\}$. By reshaping an $n_1\times n_2$-matrix to $n_1n_2$ dimensional vector one can embed $\mathcal{M}\hookrightarrow {\mathbb R}^{n_1n_2}$, making $\mathcal{M}$ a $n_1 n_2 - \frac{1}{2}n_2(n_2+1)$-dimensional manifold in ${\mathbb R}^{n_1n_2}$. We sample a set of vectors ${\mathbf z}_1, \cdots, {\mathbf z}_N\in {\mathbb R}^{n_1n_2}$ uniformly on $\mathcal{M}$, generate noise $\boldsymbol{\epsilon}_i\sim^{\rm iid} \mathcal{N}({\mathbf 0},\delta^2I_{n_1n_2})$, and set ${\mathbf x}_i = {\mathbf z}_i+\boldsymbol{\epsilon}_i$. CAE with a Hessian method~\cite{Rifai} and the alternating algorithm (AS)~\ref{alternate} are applied to the dataset $\{{\mathbf x}_i \}_{i=1}^N$. A result of a manifold learning algorithm is an autoencoder $a: {\mathbb R}^{n_1n_2}\to {\mathbb R}^{n_1n_2}$. We use two criteria to assess the quality of the manifold $a({\mathbb R}^{n_1n_2})$. The first,
\begin{equation*}
\begin{split}
e_1 = {\mathbb E}_{{\mathbf z}\sim {\rm Uniform}(\mathcal{M}), \boldsymbol{\epsilon}\sim \mathcal{N}({\mathbf 0},\delta^2I_{n_1n_2})} \big\{ \|[a({\mathbf z}+\boldsymbol{\epsilon})]_{n_1\times n_2}^T [a({\mathbf z}+\boldsymbol{\epsilon})]_{n_1\times n_2} - I_{n_2}\|_{\infty}\big\},
\end{split}
\end{equation*}
measures how close to an orthogonal matrix is an output of $a$, reshaped to an $n_1\times n_2$-matrix. The second,
$$
e_2 = \frac{{\mathbb E}_{{\mathbf z}\sim {\rm Uniform}(\mathcal{M}), \boldsymbol{\epsilon}\sim \mathcal{N}({\mathbf 0},\delta^2I_{n_1n_2})} \|a({\mathbf z}+\boldsymbol{\epsilon})-{\mathbf z}\|}{\sqrt{n_1n_2\delta^2}},
$$
measures how accurately $a$ recovers ${\mathbf z}\sim {\rm Uniform}(\mathcal{M})$.

We used CAE+H method with an autoencoder architecture 
\begin{equation*}
\begin{split}
{\mathbb R}^{n_1n_2}\overset{\sigma(W_1{\mathbf x}+{\mathbf b}_1)}{\rightarrow}{\mathbb R}^{8n_1n_2}\overset{\sigma(W_2{\mathbf x}+{\mathbf b}_2)}{\rightarrow}\\
{\mathbb R}^{n_1n_2-\frac{1}{2}n_2(n_2+1)}\overset{\sigma(W_3{\mathbf x}+{\mathbf b}_3)}{\rightarrow}{\mathbb R}^{8n_1n_2}\overset{W_4{\mathbf x} +{\mathbf b}_4}{\rightarrow}{\mathbb R}^{n_1n_2},
\end{split}
\end{equation*}
which guarantees (by construction) that ${\rm rank\,}J_{{\mathbf g}_\theta}\leq n_1n_2-\frac{1}{2}n_2(n_2+1)$.
The alternating algorithm (AS) was applied with an autoencoder architecture 
\begin{equation*}
\begin{split}
{\mathbb R}^{n_1n_2}\overset{\sigma(W_1{\mathbf x}+{\mathbf b}_1)}{\rightarrow}{\mathbb R}^{8n_1n_2}\overset{\sigma(W_2{\mathbf x}+{\mathbf b}_2)}{\rightarrow}\\
{\mathbb R}^{8n_1n_2}\overset{\sigma(W_3{\mathbf x}+{\mathbf b}_3)}{\rightarrow}{\mathbb R}^{8n_1n_2}\overset{W_4{\mathbf x} +{\mathbf b}_4}{\rightarrow}{\mathbb R}^{n_1n_2}.
\end{split}
\end{equation*} In the AS method we forced the Jacobian rank constraints by setting $k=n_1 n_2 - \frac{1}{2}n_2(n_2+1)$.
Note that we selected wide intermediate layers (8 times the input size) because the projection operator onto the Stiefel manifold appears to be uneasy to approximate by a narrow 4-Layer neural network. 
    \begin{figure*}
        \centering
        \begin{tabular}{ccc}
           
             \includegraphics[scale = 0.2]{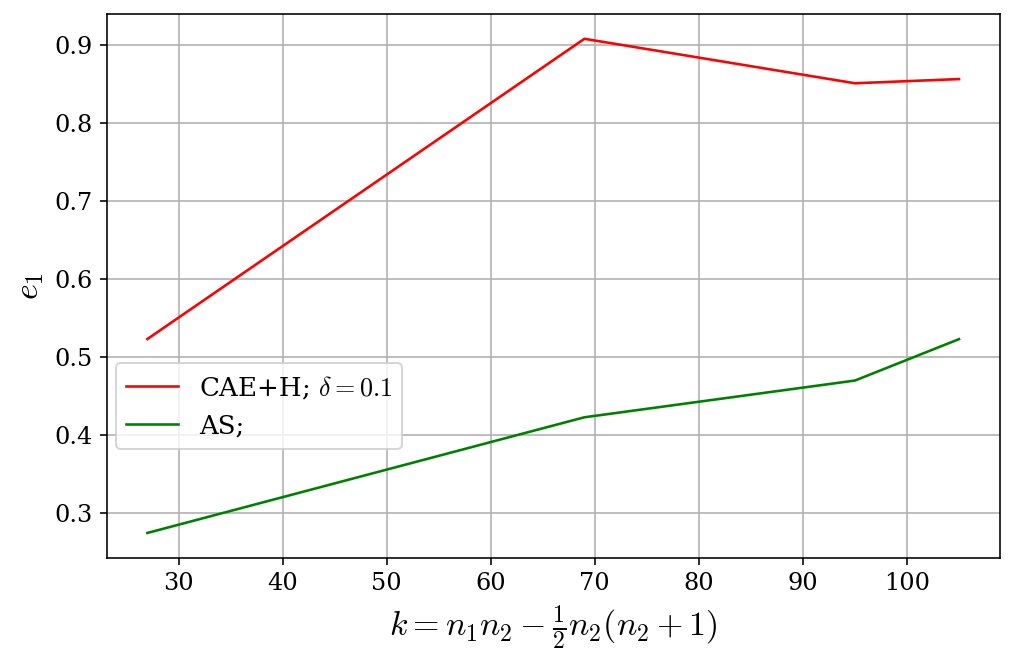} ~  &  \includegraphics[scale = 0.2]{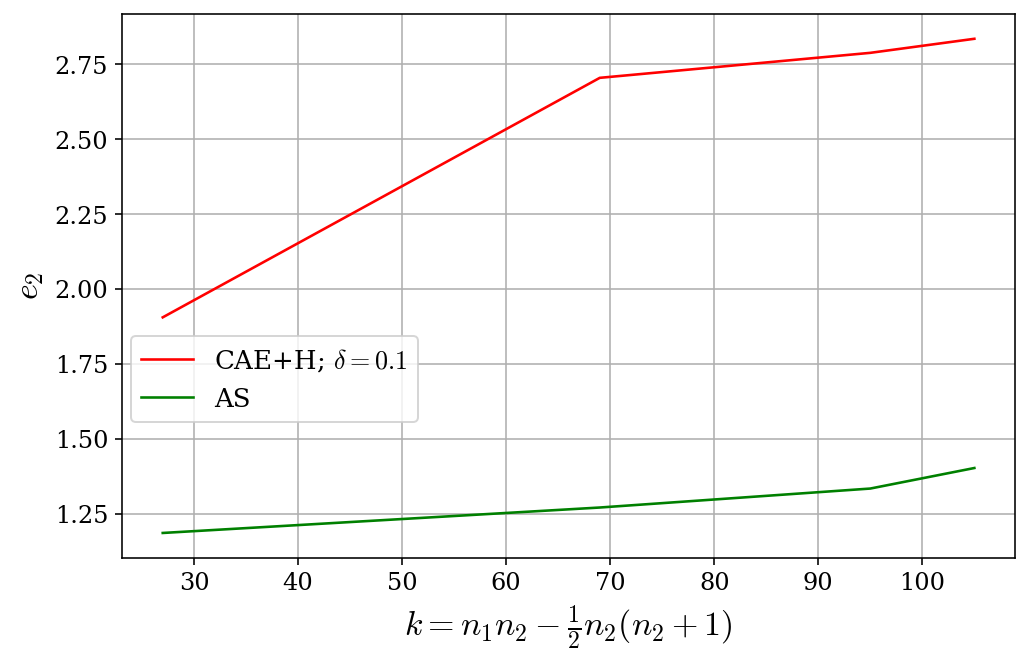} ~  & \includegraphics[scale = 0.2]{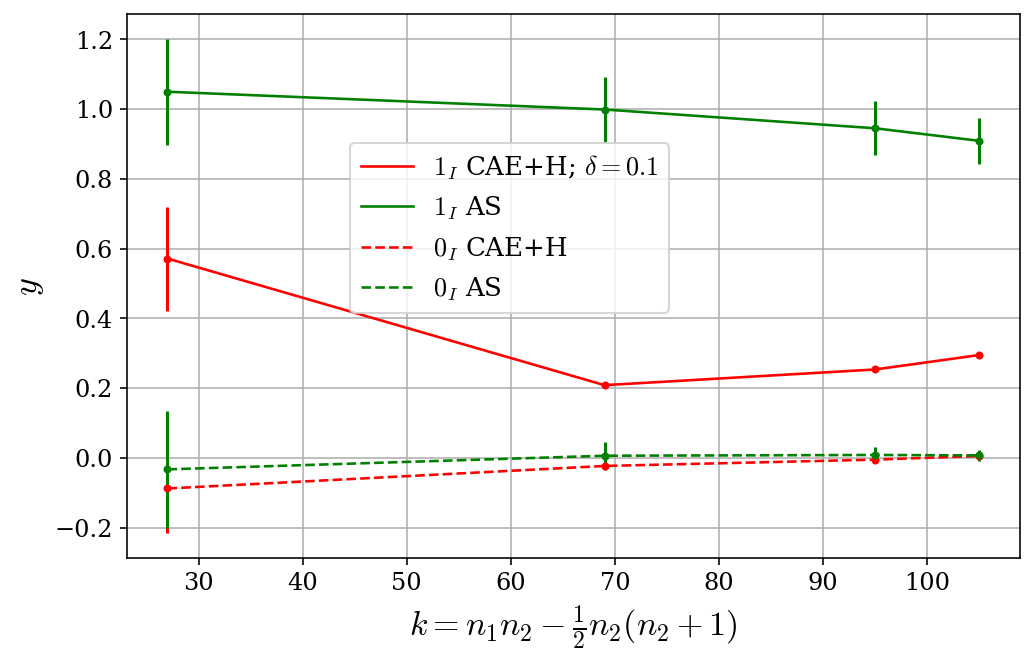} \\
             \includegraphics[scale = 0.2]{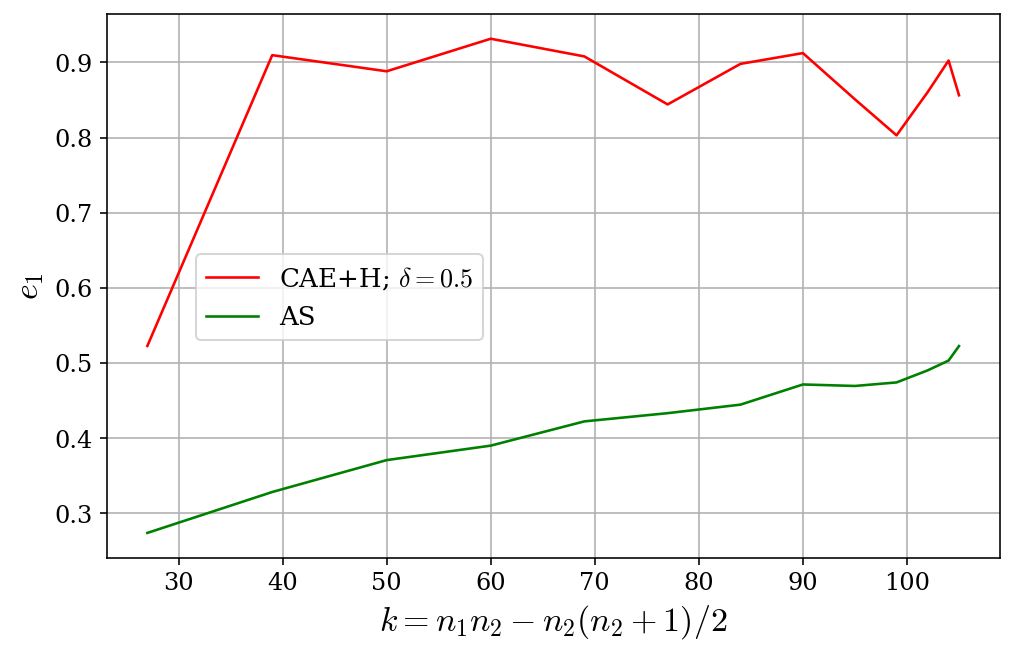} ~  & \includegraphics[scale = 0.2]{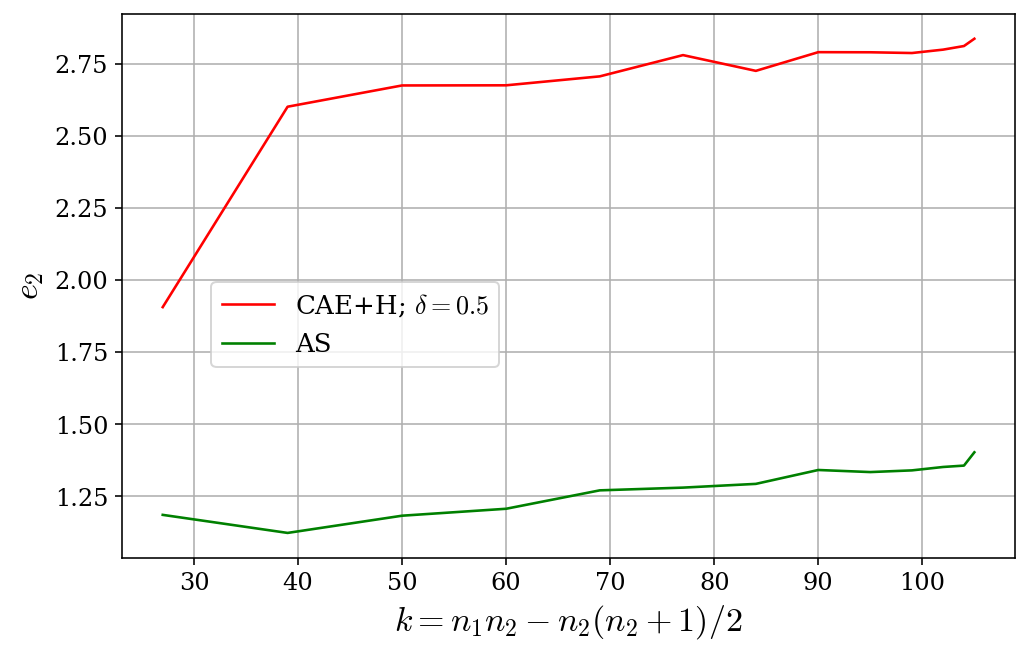} ~  & \includegraphics[scale = 0.2]{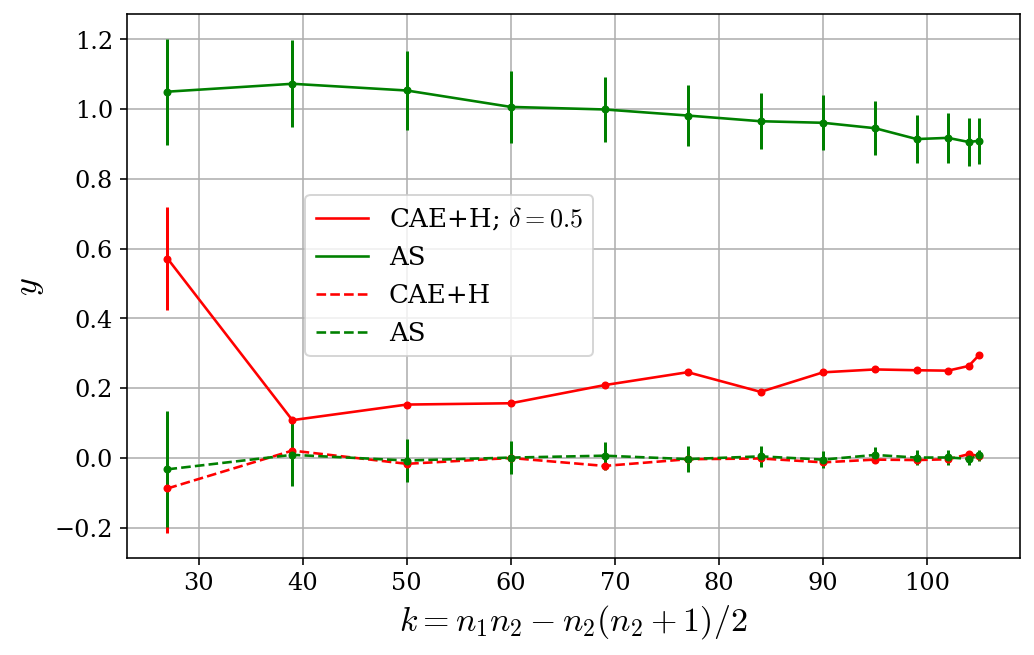}
        \end{tabular}
        \caption{$e_1$, $e_2$, $1_I$ and $0_I$ as a function of the hidden dimensionality $k$, for CAE+H and AS.}
        \label{tbl:table_of_figures}
    \end{figure*}

{\bf Discussion}. A dependence of $e_1$ and $e_2$ on the ``hidden'' dimensionality $k=n_1n_2-\frac{n_2(n_2+1)}{2}$, for fixed $n_1=15$, is given on Figure~\ref{tbl:table_of_figures}. Both errors monotonically grow with $k$, and the alternating algorithm performs better than CAE+H. The third picture shows average values of diagonal elements $1_I = \frac{1}{n^2}\sum_{i,k} a({\mathbf z}+\boldsymbol{\epsilon})_{ki} a({\mathbf z}+\boldsymbol{\epsilon})_{ki}$ and off-diagonal elements $0_I = \frac{1}{n^2(n-1)}\sum_{i\ne j}\sum_k a({\mathbf z}+\boldsymbol{\epsilon})_{ki} a({\mathbf z}+\boldsymbol{\epsilon})_{kj}$, $n=n_1n_2$, on the test set. Ideally, $1_I\approx 1$ and $0_I\approx 0$, which is better satisfied by the autoencoder trained by AS. Note that the autoencoder trained by CAE+H also has $0_I\approx 0$, though diagonal elements are systematically smaller than 1. We conjecture that this phenomenon is due to the fact that the Stiefel manifold is not diffeomorphic to ${\mathbb R}^k$ (in general, it is not connected and not simply connected), i.e. architectures with a $k$-dimensional euclidean code space are not able to learn such a manifold. On the contrary, the architecture of the AS allows to define the code space with a larger dimensionality ($d=8n_1n_2$ in our case), and to force a $k$-dimensionality of the trained manifold by the Ky-Fan antinorm term.
\subsection{Manifold tangent classifier and $K$-NN}
 Let $\{({\mathbf x}_i, y_i)\}_{i=1}^N, {\mathbf x}_i\in {\mathbb R}^n, y_i\in \mathcal{C}$ be a dataset of a classification task with a finite number of classes $\mathcal{C}$.
Let $\theta^\ast \in \Theta$ be the value of the autoencoder's parameter that is pretrained on the unsupervised dataset 
$\{{\mathbf x}_i\}_{i=1}^N$. Thus, our autoencoder is given as ${\mathbf g}_{\theta^\ast}({\mathbf x}) = {\mathbf d}_{\theta^\ast}({\mathbf e}_{\theta^\ast}({\mathbf x}))$.
We use two basic methods to estimate the quality of the manifold recovered from data (or, alternatively, of the autoencoder ${\mathbf g}_{\theta^\ast}$): $K$ nearest neighbors algorithm ($K$-NN) and the manifold tangent classifier method (MTC).
In the $K$-NN we classify any ${\mathbf x}\in {\mathbb R}^n$ using $K$ nearest neighbours algorithm applied to the code space representations, i.e. $\{{\mathbf e}_{\theta^\ast}({\mathbf x}_i)\}_{i=1}^N$ serves as the training set and we look for $K$ nearest neighbours of ${\mathbf e}_{\theta^\ast}({\mathbf x})$. The accuracy attained on the test set shows the quality of the encoder mapping ${\mathbf e}_{\theta^\ast}$.

In the MTC method, we search for a classifier in the form:
$$
p_{\scaleto{\{{\mathbf w}_c\}_{c\in \mathcal{C}}, \theta}{5pt}}(y=c| {\mathbf x}) \propto e^{{\mathbf w}_c\cdot {\mathbf e}_{\theta}({\mathbf x})}
$$
for each $c\in \mathcal{C}$. Training of the classification model (over parameters $\{{\mathbf w}_c\}_{c\in \mathcal{C}}, \theta$) is made by the Adam optimizer with the cross-entropy loss regularized by the tangent propagation terms:
$$
\Omega({\mathbf x}) = \beta \sum_{i=1}^k \|J_{{\mathbf w}_c\cdot {\mathbf e}_{\theta}}({\mathbf x}) {\mathbf u}_i\|^2,
$$
where $\{{\mathbf u}_i\}_{i=1}^k$ is basis of the tangent space of ${\mathbf g}_{\theta^\ast}({\mathbb R}^n)$ at point ${\mathbf g}_{\theta^\ast}({\mathbf x})$. This basis is calculated as the first $k$ left singular vectors of $J_{{\mathbf g}_{\theta^\ast}}({\mathbf x})$ (or, the first $k$ right singular vectors of $J_{{\mathbf e}_{\theta^\ast}}({\mathbf x})$ in MTC of~\cite{Rifai2}).
The key aspect of the MTC method is the initialization $\theta^0 = \theta^\ast$ of the optimizer that makes deeper levels of the neural network to be close to the pre-trained encoder function, ${\mathbf w}^0_c$ is initialized randomly.

\subsection{Experiments on real world data}\label{rwd}
Following~\cite{Rifai}, after training an autoencoder, we use the reduced encoding of an input point as a new object representation. Given that representation, we subsequently exploit it for the classification of test set data with the $K$-NN algorithm for $K=1, 2, \cdots, 19$ and the MTC method. 
We use 4-Layer autoencoder with dimensions of hidden layers $n_1, n_2, n_1$, i.e. our encoder is a composition of the form ${\mathbb R}^n\to {\mathbb R}^{n_1}\to {\mathbb R}^{n_2}$ and a decoder is a composition of the form ${\mathbb R}^{n_2}\to {\mathbb R}^{n_1}\to {\mathbb R}^{n}$. 
Parameters of an autoencoder training procedure (code space sizes $n_1, n_2$, the dimension $k$, etc) and of the MTC method ($\beta$) were tuned on a validation set. See details of implementation in Appendix. An accuracy of the classification on a test set attests to the quality of obtained representation, or the usefulness of our manifold learning technique. Alternative representations are obtained using the CAE+Hessian method~\cite{Rifai}. Details can be found on \href{https://github.com/cvfrs/as}{github}. Results are reported in the table~\ref{results} (AS denotes the algorithm~\ref{alternate}).
\begin{table*}
\scriptsize
\centering
\begin{tabular}{|c |c | c | c | c | c | c| c| c|} 
\hline
 Data & \multicolumn{2}{c|}{CAE+H} &   \multicolumn{2}{c|}{AS ($\kappa_1$)}    &  \multicolumn{2}{c|}{AS ($\kappa_2$)}  & \multicolumn{2}{c|}{CAE+H (6L)}\\ [0.4ex] 
 \hline
 set &  $K$-NN & MTC &   $K$-NN  & MTC   & $K$-NN  & MTC  & $K$-NN & MTC \\ [0.4ex] 
\hline
Higgs & 84.6 &  83.3 & {\bf 85.5}  & {\bf 83.4} &  85.0   & {\bf 83.4} & 84.8 & 83.2\\
\hline

Forest &  86.6 & 84.2 &  {\bf 88.5} & {\bf 86.3} &  87.6   & 85.8 & 86.3 & 84.8\\
\hline

MNIST &  95.9 & 96.2 & {\bf 98.6} &  {\bf 98.2} &  98.2  & {\bf 98.2} & 97.2 & 97.0\\
\hline

CIFAR-10 &  46.7 & 47.7 &  48.3 & 47.9  &  {\bf 48.4}  & {\bf 48.0} & 40.9 & 41.5\\
\hline

SVHN &  56.0  &  60.6  &  {\bf 70.5} &  {\bf 73.6} &  70.4 & {\bf 73.6}  & 59.2 & 60.6\\
\hline

STL-10 &  37.2  &  33.3  &  37.0  & {\bf 37.3}  & {\bf 37.5}  & {\bf 37.3}  & 36.0 & 34.5 \\
\hline

\end{tabular}
\caption{Results of experiments for a 4-Layers autoencoder, with a 2-Layers encoder (dimensions of layers are $n_1, n_2$) and a 2-Layers decoder (dimensions of layers are $n_2, n_1$). Last two columns list results for a 6-Layers autoencoder, with a 3-Layers encoder (dimensions of  layers are $n, n_1, n_2, k$) and a 3-Layers decoder (dimensions of layers are $k, n_2, n_1, n$).}\label{results}
\end{table*}

{\bf Discussion}.
An autoencoder trained by the AS with a curvature-related term $\kappa_1$ demonstrates the best accuracy of the MTC method on a test set. We conclude that the AS with the $\kappa_1$-term calculates tangent spaces more accurately than the AS with the $\kappa_2$-term, and the latter method is more accurate than the CAE+H with the $\kappa_0$-term. 

The MTC method uses bases of tangent spaces, computed from a pre-trained autoencoder. For all datasets we observed that an optimal value of $\beta$ is either 0.01 or 0.0, which indicates that the role of the tangent propagation term is less important than the role of the initialization. Recall that in the MTC method we initialize the first two layers of a neural network for classification by pre-trained weights of the encoder.

Sometimes the accuracy of the KNN is higher than that of the MTC, which is 
another interesting aspect of our experimental results. 
A possible explanation for this fact one can imagine could be that two hidden layers of an encoder are already optimally tuned on unsupervised data, and a supervised learning part of the MTC only tunes the final layer while leaving the previous two layers unchanged. If this is the case, then the fitting capacity of the MTC can be even weaker than that of the KNN.

The last two columns of Table~\ref{results} demonstrate the results of experiments with CAE+H ($K$-NN and MTC) and a 6-Layers autoencoder in which an encoder is a composition of the form ${\mathbb R}^n\to {\mathbb R}^{n_1}\to {\mathbb R}^{n_2}\to {\mathbb R}^{k}$ and a decoder is a composition of the form ${\mathbb R}^{k}\to {\mathbb R}^{n_2}\to {\mathbb R}^{n_1}\to {\mathbb R}^{n}$ (code sizes $n_1, n_2$ are the same as in the previous experiments with 4-Layers autoencoder). Note that the rank of such an autoencoder does not exceed $k$ by construction. The motivation of these experiments was to compare obtained accuracies with the AS method of training an autoencoder in which a rank is reduced by the soft term, i.e. the Ky Fan $k$-antinorm term. Our results demonstrate that indeed the soft way of reducing a rank surpasses the architecture-based ``crisp'' reduction.

Finally, we experimented with a joint influence of parameters $\lambda$ and $\gamma$ on the accuracy of the 1-NN classifier (and, therefore, on the quality of representation of data points in the latent space). One can see level curves of the accuracy as a function of $\log_2\lambda$ and $\gamma$ on Figure~\ref{contour}. Since the Algorithm~\ref{alternate} does not directly optimize the accuracy of 1-NN, the latter value does not depend on $\lambda, \gamma$ in a simple deterministic way. So, level curves demonstrate some local minima and maxima which can be explained by the random nature of the experiment.  The accuracy of 1-NN for the Forest dataset varies between 76.0\% and 86.5\% throughout the whole domain, which indicates that the recovered manifold is robust with respect to changes in $\lambda, \gamma$. Also, Figure~\ref{contour} clearly demonstrates one global maximum located at $(\log_2\lambda, \gamma)\approx (5.5, 4)$.
If the existence of an optimal $\gamma$ is natural (because $\gamma$ controls how strongly a manifold can be bent), the existence of an optimal $\lambda$ should be explained. Indeed, $\lambda\to+\infty$ is just equivalent to requiring a rank of the Jacobian to be strictly less or equal to $k$ at specified points. We believe that the decay of the accuracy for $\lambda\to+\infty$ occurs due to the non-convexity of the Ky Fan $k$-antinorm. In fact, it is well-known that the Ky Fan $k$-antinorm has exponentially many local minima. As $\lambda\to+\infty$, these local minima start to dominate and the minimization of the objective~\eqref{penalty} can lead us to a landscape of low-quality solutions. An analogous picture is observed for the SVHN dataset.

Contour plots for the accuracy of the MTC are clearly more random, which can be seen from a smaller variation of the accuracy (between 82.2\% and 83.7\% for the Forest dataset) and a lack of a noticeable global maximum. This picture can be explained by the fact that the MTC further optimizes an encoder's weights and eventually converges to its own optimal point. Thus, the quality of the MTC is a more sophisticated function of parameters $\lambda,\gamma$.
    \begin{figure*}
        \centering
        \begin{tabular}{ccc}
             \includegraphics[scale = 0.45]{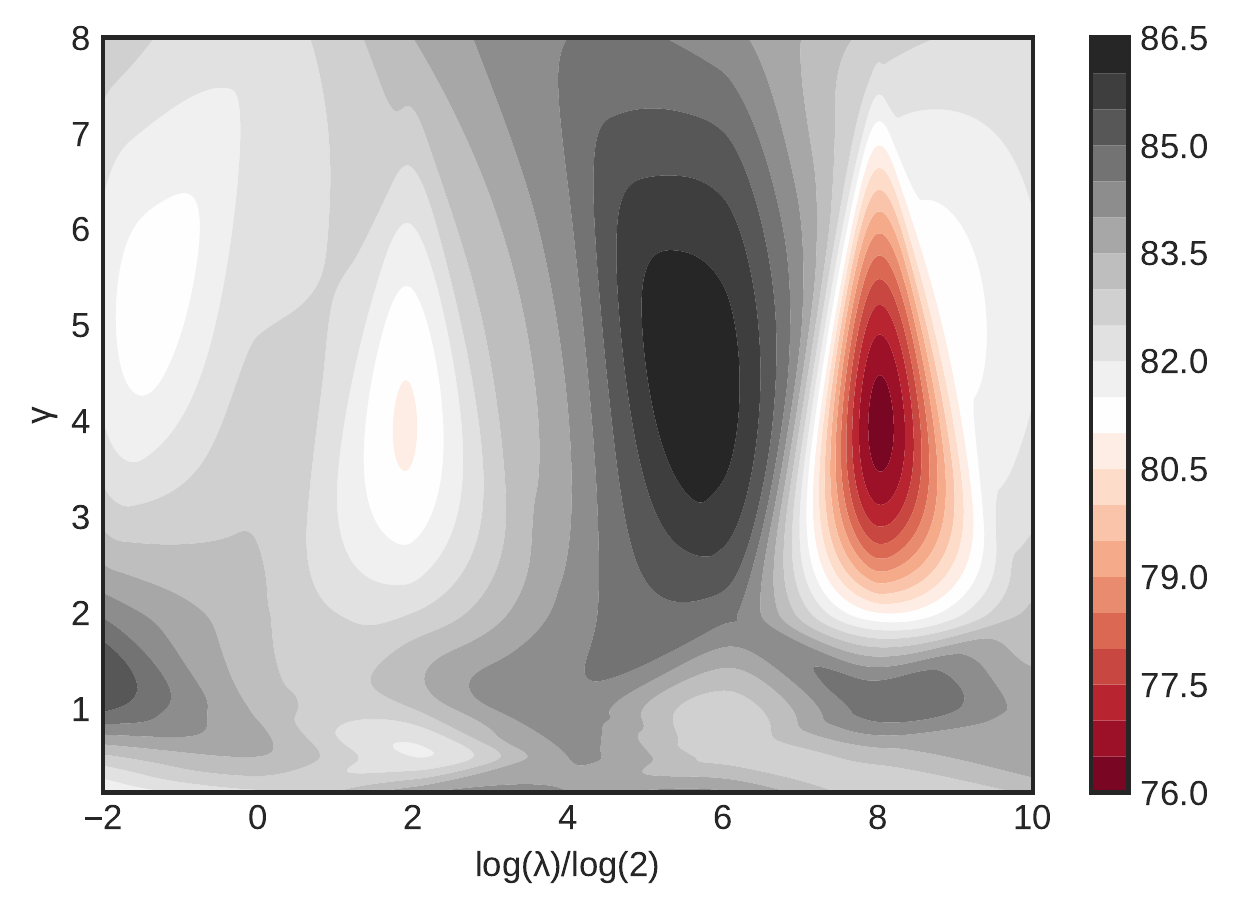} ~  &  \includegraphics[scale = 0.45]{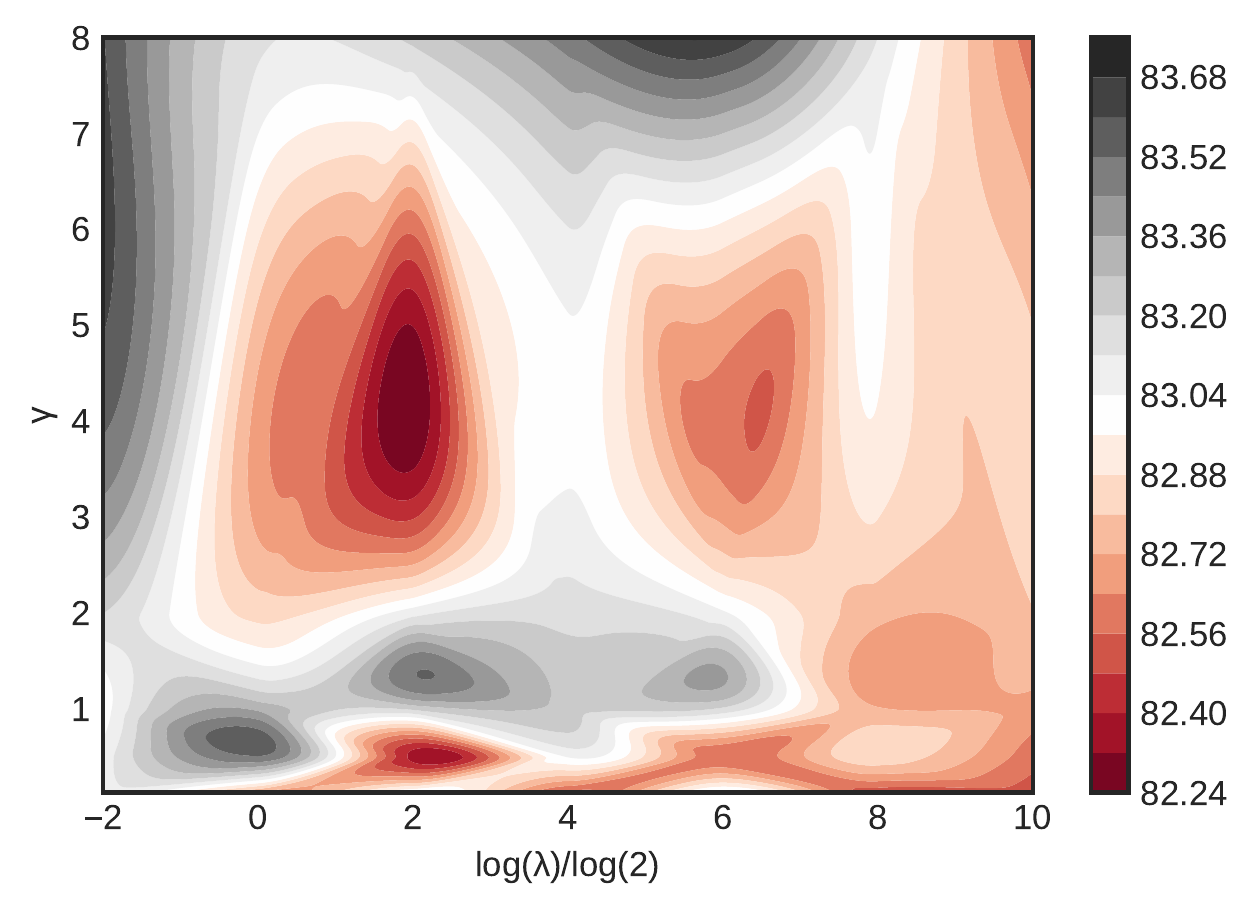} \\
             \includegraphics[scale = 0.45]{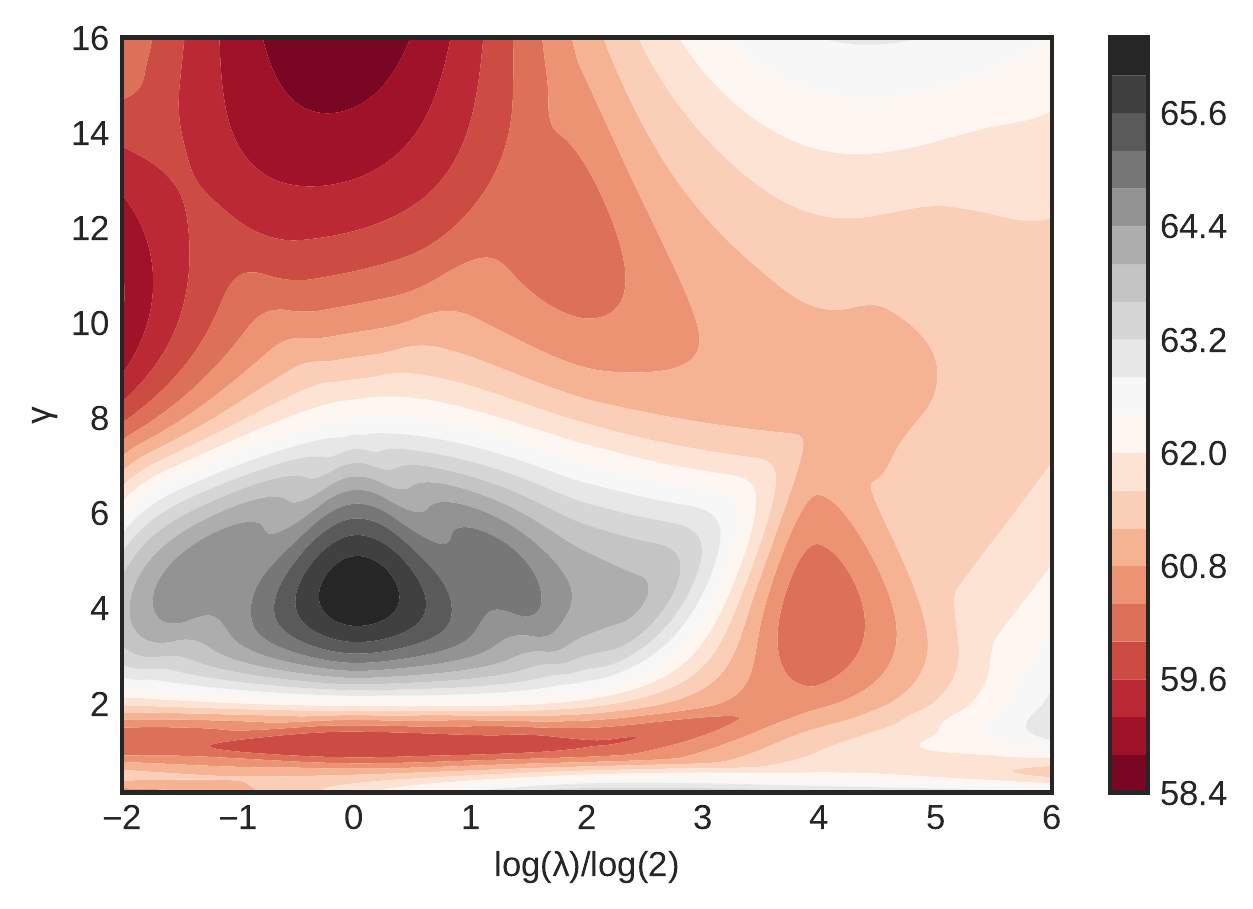} ~  &  \includegraphics[scale = 0.45]{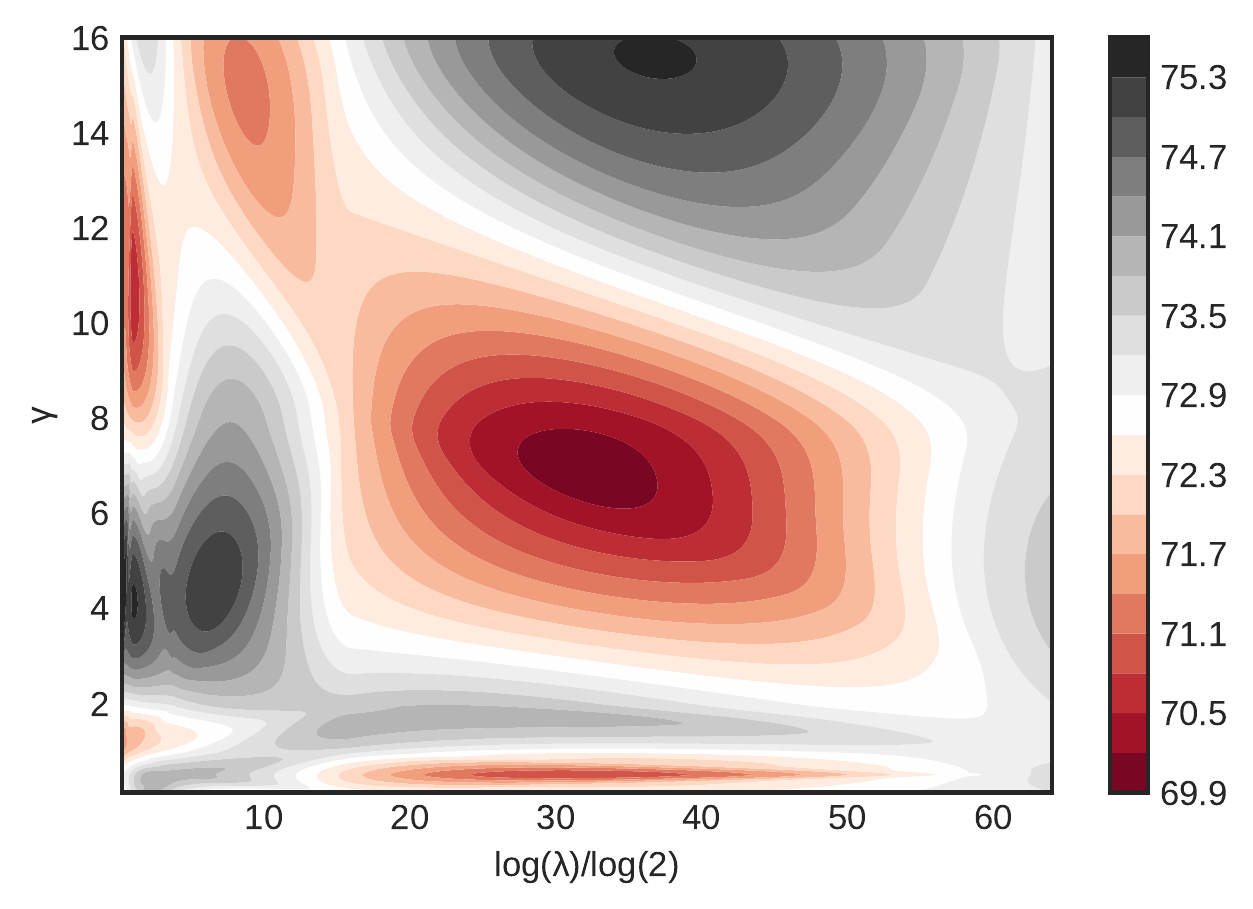} \\
        \end{tabular}
        \caption{Contour plots of the accuracy of the 1-NN on latent space  (left plot) and of the accuracy of the MTC (right plot) as functions of $\log_2\lambda$ and $\gamma$. The first row shows plots for Forest dataset, $n_1=28, n_2=14, k=5, \sigma=0.8$, the second row shows plots for SVHN dataset, $n_1=150, n_2=75, k=30, \sigma=0.8$.}
        \label{contour}
    \end{figure*}

\section{Related work}
The presented approach can be viewed as a method to regularize an autoencoder. There is a variety of  autoencoder regularization techniques, including higher-order contractive autoencoders~\cite{Rifai}, denoising autoencoders~\cite{Vincent}, variational autoencoders~\cite{kingma2014autoencoding} etc. 
Geometrical~\cite{gu2018learning,Skopek2020Mixed-curvature,chadebec2020geometryaware} and topological approaches~\cite{pmlr-v119-moor20a,schnenberger2020witness} to an autoencoder training have attracted some attention recently, though an emphasis has been made on a structure of the latent space (unlike the current paper, where we directly study the geometry of a data manifold). 

Using Ky-Fan $k$-antinorm in optimization problems for forcing a matrix to be of rank $k$ has been suggested
in~\cite{6389682} and further developed in~\cite{7182339,7335627,HONG2016216}.
\section{Conclusions and future work}
The idea of using autoencoders for the manifold learning task belongs to a data science folklore. Most of its realizations are defined by: a) an architecture of an autoencoder, b) an objective function, c) a method for an optimization (usually, the gradient descent). We employed the Constant Rank Theorem to substantiate the use of an additional term  forcing the Jacobian of an autoencoder to be of rank $k$. To minimize such an objective, we develop a new algorithm based on alternating the gradient descent and an update of tangent spaces (given by their basis vectors) to a learned manifold at some data points. Our experiments demonstrate that the designed algorithm successfully minimizes the objective and solves the manifold learning task. The quality of the manifold computed by the algorithm surpasses that of the Contractive Autoencoder which is the baseline with which we compare our method. The main practical limitation of our algorithm comes from a high cost of computing SVD at each iteration, which leads to poor scalability for  higher-dimensional data. Decreasing the parameter $M$ (a number of points at which the Ky-Fan antinorm of the Jacobian is minimized) significantly improves the speed of the algorithm at the expense of some deviation of an autoencoder's range from $k$-dimensionality. This problem can be partially solved by choosing new architectures of an autoencoder (e.g., by introducing some sparsity in its weights). 
Another limitation of our work is that we do not study how the Ky-Fan $k$-antinorm term affects a dataset representation in the latent space.
Future research will be focused on these issues. We will also study how an early termination and an optimization of parameters of the algorithm can be used to obtain a manifold faster and without the loss of its properties.

\section*{Acknowledgments}
The authors
would like to thank Atakan Varol  and Makat Tlebaliyev for providing computational resources of the Institute of Smart Systems and Artificial Intelligence (ISSAI).

\bibliographystyle{unsrt}  

\begin{thebibliography}{10}

\bibitem{caytonalgorithms}
L.~Cayton.
\newblock {Algorithms for manifold learning}.
\newblock {\em Univ. of California at San Diego Tech. Rep}, 2005.

\bibitem{Goodfellow-et-al-2016}
Ian Goodfellow, Yoshua Bengio, and Aaron Courville.
\newblock {\em Deep Learning}.
\newblock MIT Press, 2016.
\newblock \url{http://www.deeplearningbook.org}.

\bibitem{TangentProp}
Patrice Simard, Bernard Victorri, Yann LeCun, and John Denker.
\newblock Tangent prop - a formalism for specifying selected invariances in an
  adaptive network.
\newblock In J.~E. Moody, S.~J. Hanson, and R.~P. Lippmann, editors, {\em
  Advances in Neural Information Processing Systems 4}, pages 895--903.
  Morgan-Kaufmann, 1992.

\bibitem{Roweis}
Sam~T. Roweis and Lawrence~K. Saul.
\newblock Nonlinear dimensionality reduction by locally linear embedding.
\newblock {\em SCIENCE}, 290:2323--2326, 2000.

\bibitem{Tenenbaum}
Joshua~B. Tenenbaum, Vin~de Silva, and John~C. Langford.
\newblock A global geometric framework for nonlinear dimensionality reduction.
\newblock {\em Science}, 290(5500):2319--2323, 2000.

\bibitem{Donoho5591}
David~L. Donoho and Carrie Grimes.
\newblock Hessian eigenmaps: Locally linear embedding techniques for
  high-dimensional data.
\newblock {\em Proceedings of the National Academy of Sciences},
  100(10):5591--5596, 2003.

\bibitem{Vincent}
Pascal Vincent, Hugo Larochelle, Yoshua Bengio, and Pierre-Antoine Manzagol.
\newblock Extracting and composing robust features with denoising autoencoders.
\newblock In {\em Proceedings of the 25th International Conference on Machine
  Learning}, ICML '08, page 1096–1103, New York, NY, USA, 2008. Association
  for Computing Machinery.

\bibitem{Diffeomorphic}
Christian Walder and Bernhard Sch\"{o}lkopf.
\newblock Diffeomorphic dimensionality reduction.
\newblock In D.~Koller, D.~Schuurmans, Y.~Bengio, and L.~Bottou, editors, {\em
  Advances in Neural Information Processing Systems 21}, pages 1713--1720.
  Curran Associates, Inc., 2009.

\bibitem{Belkin}
Mikhail Belkin and Partha Niyogi.
\newblock Laplacian eigenmaps and spectral techniques for embedding and
  clustering.
\newblock In {\em Proceedings of the 14th International Conference on Neural
  Information Processing Systems: Natural and Synthetic}, NIPS'01, page
  585–591, Cambridge, MA, USA, 2001. MIT Press.

\bibitem{Chaoqun1}
Chaoqun Hong, Jun Yu, Jian Zhang, Xiongnan Jin, and Kyong-Ho Lee.
\newblock Multimodal face-pose estimation with multitask manifold deep
  learning.
\newblock {\em IEEE Transactions on Industrial Informatics}, 15(7):3952--3961,
  2019.

\bibitem{Chaoqun2}
Chaoqun Hong, Jun Yu, Jian Wan, Dacheng Tao, and Meng Wang.
\newblock Multimodal deep autoencoder for human pose recovery.
\newblock {\em IEEE Transactions on Image Processing}, 24(12):5659--5670, 2015.

\bibitem{YuJun}
Jun Yu, Min Tan, Hongyuan Zhang, Yong Rui, and Dacheng Tao.
\newblock Hierarchical deep click feature prediction for fine-grained image
  recognition.
\newblock {\em IEEE Transactions on Pattern Analysis and Machine Intelligence},
  44(2):563--578, 2022.

\bibitem{YulingFan}
Yuling Fan, Jinghua Liu, Peizhong Liu, Yongzhao Du, Weiyao Lan, and Shunxiang
  Wu.
\newblock Manifold learning with structured subspace for multi-label feature
  selection.
\newblock {\em Pattern Recognition}, 120:108169, 2021.

\bibitem{ChaoTan}
Chao Tan, Sheng Chen, Xin Geng, and Genlin Ji.
\newblock A novel label enhancement algorithm based on manifold learning.
\newblock {\em Pattern Recognition}, 135:109189, 2023.

\bibitem{Kramer}
Mark~A. Kramer.
\newblock Nonlinear principal component analysis using autoassociative neural
  networks.
\newblock {\em AIChE Journal}, 37(2):233--243, 1991.

\bibitem{Hinton}
G.~E. Hinton and R.~R. Salakhutdinov.
\newblock Reducing the dimensionality of data with neural networks.
\newblock {\em Science}, 313(5786):504--507, 2006.

\bibitem{Rifai}
Salah Rifai, Yann~N Dauphin, Pascal Vincent, Yoshua Bengio, and Xavier Muller.
\newblock The manifold tangent classifier.
\newblock In J.~Shawe-Taylor, R.~S. Zemel, P.~L. Bartlett, F.~Pereira, and
  K.~Q. Weinberger, editors, {\em Advances in Neural Information Processing
  Systems 24}, pages 2294--2302. Curran Associates, Inc., 2011.

\bibitem{Rifai2}
Salah Rifai, Gr{\'e}goire Mesnil, Pascal Vincent, Xavier Muller, Yoshua Bengio,
  Yann Dauphin, and Xavier Glorot.
\newblock Higher order contractive auto-encoder.
\newblock In Dimitrios Gunopulos, Thomas Hofmann, Donato Malerba, and Michalis
  Vazirgiannis, editors, {\em Machine Learning and Knowledge Discovery in
  Databases}, pages 645--660, Berlin, Heidelberg, 2011. Springer Berlin
  Heidelberg.

\bibitem{lee2013introduction}
J.M. Lee.
\newblock {\em Introduction to Smooth Manifolds}.
\newblock Graduate Texts in Mathematics. Springer New York, 2013.

\bibitem{klein}
Stewart Dickson.
\newblock Klein bottle graphic.
\newblock \url{https://library.wolfram.com/infocenter/MathSource/4560/}.

\bibitem{SVD_of_product}
Gene Golub, Knut Solna, and Paul Van~Dooren.
\newblock Computing the svd of a general matrix product/quotient.
\newblock {\em SIAM Journal on Matrix Analysis and Applications}, 22(1):1--19,
  2000.

\bibitem{higgs}
Pierre Baldi, P.~Allan Sadowski, and D.~O. Whiteson.
\newblock Searching for exotic particles in high-energy physics with deep
  learning.
\newblock {\em Nature communications}, 5:4308, 2014.

\bibitem{forest}
Jock~A. Blackard.
\newblock {\em Comparison of Neural Networks and Discriminant Analysis in
  Predicting Forest Cover Types}.
\newblock PhD thesis, USA, 1998.
\newblock AAI9921979.

\bibitem{mnist}
Y.~{Lecun}, L.~{Bottou}, Y.~{Bengio}, and P.~{Haffner}.
\newblock Gradient-based learning applied to document recognition.
\newblock {\em Proceedings of the IEEE}, 86(11):2278--2324, 1998.

\bibitem{krizhevsky2009learning}
Alex Krizhevsky.
\newblock Learning multiple layers of features from tiny images.
\newblock pages 32--33, 2009.

\bibitem{YuvalNetzer}
Yuval Netzer, Tao Wang, Adam Coates, Alessandro Bissacco, Bo~Wu, and Andrew~Y.
  Ng.
\newblock Reading digits in natural images with unsupervised feature learning.
\newblock In {\em NIPS Workshop on Deep Learning and Unsupervised Feature
  Learning 2011}, 2011.

\bibitem{pmlr-v15-coates11a}
Adam Coates, Andrew Ng, and Honglak Lee.
\newblock An analysis of single-layer networks in unsupervised feature
  learning.
\newblock In Geoffrey Gordon, David Dunson, and Miroslav Dudík, editors, {\em
  Proceedings of the Fourteenth International Conference on Artificial
  Intelligence and Statistics}, volume~15 of {\em Proceedings of Machine
  Learning Research}, pages 215--223, Fort Lauderdale, FL, USA, 11--13 Apr
  2011. PMLR.

\bibitem{kingma2014autoencoding}
Diederik~P Kingma and Max Welling.
\newblock Auto-encoding variational bayes, 2014.

\bibitem{gu2018learning}
Albert Gu, Frederic Sala, Beliz Gunel, and Christopher Ré.
\newblock Learning mixed-curvature representations in product spaces.
\newblock In {\em International Conference on Learning Representations}, 2019.

\bibitem{Skopek2020Mixed-curvature}
Ondrej Skopek, Octavian-Eugen Ganea, and Gary Bécigneul.
\newblock Mixed-curvature variational autoencoders.
\newblock In {\em International Conference on Learning Representations}, 2020.

\bibitem{chadebec2020geometryaware}
Clément Chadebec, Clément Mantoux, and Stéphanie Allassonnière.
\newblock Geometry-aware hamiltonian variational auto-encoder, 2020.

\bibitem{pmlr-v119-moor20a}
Michael Moor, Max Horn, Bastian Rieck, and Karsten Borgwardt.
\newblock Topological autoencoders.
\newblock In Hal~Daumé III and Aarti Singh, editors, {\em Proceedings of the
  37th International Conference on Machine Learning}, volume 119 of {\em
  Proceedings of Machine Learning Research}, pages 7045--7054. PMLR, 13--18 Jul
  2020.

\bibitem{schnenberger2020witness}
Simon~Till Sch{\"o}nenberger, Anastasiia Varava, Vladislav Polianskii, Jen~Jen
  Chung, Danica Kragic, and Roland Siegwart.
\newblock Witness autoencoder: Shaping the latent space with witness complexes.
\newblock In {\em NeurIPS 2020 Workshop on Topological Data Analysis and
  Beyond}, 2020.

\bibitem{6389682}
Yao Hu, Debing Zhang, Jieping Ye, Xuelong Li, and Xiaofei He.
\newblock Fast and accurate matrix completion via truncated nuclear norm
  regularization.
\newblock {\em IEEE Transactions on Pattern Analysis and Machine Intelligence},
  35(9):2117--2130, 2013.

\bibitem{7182339}
Tae-Hyun Oh, Yu-Wing Tai, Jean-Charles Bazin, Hyeongwoo Kim, and In~So Kweon.
\newblock Partial sum minimization of singular values in robust pca: Algorithm
  and applications.
\newblock {\em IEEE Transactions on Pattern Analysis and Machine Intelligence},
  38(4):744--758, 2016.

\bibitem{7335627}
Qing Liu, Zhihui Lai, Zongwei Zhou, Fangjun Kuang, and Zhong Jin.
\newblock A truncated nuclear norm regularization method based on weighted
  residual error for matrix completion.
\newblock {\em IEEE Transactions on Image Processing}, 25(1):316--330, 2016.

\bibitem{HONG2016216}
Bin Hong, Long Wei, Yao Hu, Deng Cai, and Xiaofei He.
\newblock Online robust principal component analysis via truncated nuclear norm
  regularization.
\newblock {\em Neurocomputing}, 175:216--222, 2016.

\end{thebibliography}
\newcommand{\noopsort}[1]{}

\appendix
\section*{Proof of Theorem~\ref{opposite}}
W.l.o.g. let us assume $S = {\mathbb R}^n$.
We will use the following classical theorem.

\begin{theorem}[Constant Rank Theorem]
Let $A_0 \subseteq {\mathbb R}^n$ and $B_0 \subseteq {\mathbb R}^n$ be open sets,
$F: A_0\rightarrow  B_0$ be an infinitely differentiable mapping, and suppose the rank of $J_{F}$ on $A_0$ to be equal to $k$.
If $a\in A_0$ and $b = F(a)$, then there exist open sets $A \subseteq A_0$ and $B \subseteq B_0$ with
$a\in A$ and $F(A)\subseteq B$, and there exist infinitely differentiable one-to-one mappings $G: A\rightarrow U$, $H: B \rightarrow V$ (where $U, V\subseteq {\mathbb R}^n$ are open) such that $H\circ F\circ G^{-1} (U)\subseteq V$ and:
$$
H\circ F\circ G^{-1}(x_1, \cdots, x_n) = (x_1, \cdots, x_k, {\mathbf 0}_{n-k}).
$$
\end{theorem}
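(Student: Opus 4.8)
The plan is to reduce the statement to two successive applications of the classical Inverse Function Theorem: the first produces the source diffeomorphism $G$ that straightens the first $k$ components of $F$, and the second produces the target diffeomorphism $H$ that eliminates the remaining dependence. Since $\rank J_F(a) = k$, after permuting the coordinates of both copies of ${\mathbb R}^n$ I may assume the upper-left $k\times k$ block of $J_F(a)$ is invertible. Writing source coordinates as $(x', x'') \in {\mathbb R}^k \times {\mathbb R}^{n-k}$ and $F = (F', F'')$ accordingly, this says $\partial F'/\partial x'$ is invertible at $a$.

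First I would construct $G$. Define $\Phi(x', x'') = (F'(x', x''), x'')$; its Jacobian at $a$ is block upper-triangular with invertible diagonal blocks $\partial F'/\partial x'$ and $I_{n-k}$, hence invertible. The Inverse Function Theorem makes $\Phi$ a smooth diffeomorphism on a neighborhood of $a$, so I set $G = \Phi$. In the new source coordinates the first $k$ components of $F \circ G^{-1}$ are the identity, so $F \circ G^{-1}(u', u'') = (u', \widetilde{F}(u', u''))$ for a smooth ${\mathbb R}^{n-k}$-valued map $\widetilde{F}$.

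The heart of the argument — and the step where the \emph{constant} rank hypothesis is indispensable — is the next observation. Because $G$ is a diffeomorphism, $J_{F \circ G^{-1}}$ has rank $k$ at \emph{every} point of the neighborhood. But this Jacobian has the block form
\begin{equation*}
J_{F \circ G^{-1}} = \begin{pmatrix} I_k & 0 \\ \partial \widetilde{F}/\partial u' & \partial \widetilde{F}/\partial u'' \end{pmatrix},
\end{equation*}
whose rank equals $k + \rank\bigl(\partial \widetilde{F}/\partial u''\bigr)$ after clearing the lower-left block against the identity. Forcing this to equal $k$ throughout the neighborhood gives $\partial \widetilde{F}/\partial u'' \equiv 0$, so on a connected neighborhood $\widetilde{F}$ depends on $u'$ alone, i.e. $\widetilde{F}(u', u'') = \widetilde{F}(u')$. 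Had I only assumed rank $k$ at the single point $a$, the vanishing of this block would hold merely at $a$, which would not suffice; promoting a pointwise rank condition to this global vanishing is exactly what the constant-rank hypothesis buys, and it is the main obstacle in the proof.

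Finally I would construct $H$ near $b = F(a)$ to absorb the residual dependence $\widetilde{F}(u')$. Setting $H(v', v'') = (v', v'' - \widetilde{F}(v'))$ yields a diffeomorphism: its Jacobian is block lower-triangular with identity diagonal blocks, and its explicit inverse is $(w', w'') \mapsto (w', w'' + \widetilde{F}(w'))$. Composing then gives $H \circ F \circ G^{-1}(u', u'') = H(u', \widetilde{F}(u')) = (u', {\mathbf 0}_{n-k})$, which is the asserted normal form. The only remaining care is to shrink $A$, $B$, $U$, $V$ so that all compositions are defined and the images land in the required sets, which is routine bookkeeping once the two diffeomorphisms are in hand.
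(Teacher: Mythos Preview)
Your argument is the standard textbook proof of the Constant Rank Theorem and is correct: the two applications of the Inverse Function Theorem, together with the key observation that constant rank forces $\partial\widetilde{F}/\partial u'' \equiv 0$, are exactly the right ingredients, and your block-Jacobian computation is accurate. The only residual bookkeeping is, as you note, shrinking neighborhoods and absorbing the initial coordinate permutations into $G$ and $H$.

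As for comparison with the paper: the paper does \emph{not} prove this statement at all. It merely quotes the Constant Rank Theorem as a classical result (introducing it with ``We will use the following classical theorem'') and then applies it in the proof of Theorem~\ref{opposite}. So there is no competing argument to compare against; your proof supplies what the paper deliberately omits, and it matches the standard presentation one finds in the reference the paper cites (Lee, \emph{Introduction to Smooth Manifolds}).
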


The Constant Rank Theorem implies that if ${\mathbf g}: {\mathbb R}^n \rightarrow {\mathbb R}^n$ is infinitely differentiable and ${\rm rank\,} J_{\mathbf g}({\mathbf x}) = k, {\mathbf x}\in {\mathbb R}^n$, then for any ${\mathbf p}\in {\mathbb R}^n$ there are open sets $A_{{\mathbf p}}\ni {\mathbf p}$, $B_{{\mathbf p}}\ni {\mathbf g}({\mathbf p})$ and infinitely differentiable one-to-one mappings $G_{{\mathbf p}}: A_{{\mathbf p}}\rightarrow U_{{\mathbf p}}$ and $H_{{\mathbf p}}: B_{{\mathbf p}}\rightarrow V_{{\mathbf p}}$ such that $$H_{{\mathbf p}}\circ {\mathbf g} \circ G_{{\mathbf p}}^{-1}(x_1, \cdots, x_n) = (x_1, \cdots, x_k, {\mathbf 0}_{n-k})$$
for any $(x_1, \cdots, x_n)\in U_{{\mathbf p}}$.
Therefore, $${\mathbf g} \circ G_{{\mathbf p}}^{-1}(x_1, \cdots, x_n) = H_{{\mathbf p}}^{-1}(x_1, \cdots, x_k, {\mathbf 0}_{n-k})$$ and ${\mathbf g} (A_{{\mathbf p}}) = H_{{\mathbf p}}^{-1}(W_{{\mathbf p}}\times \{{\mathbf 0}_{n-k}\})$ where
$$
W_{{\mathbf p}} = \{(x_1, \cdots, x_k)| (x_1, \cdots, x_k, {\mathbf 0}_{n-k})\in U_{{\mathbf p}}\}
$$
is an open subset of ${\mathbb R}^k$. It is easy to see that 
$$\phi_{{\mathbf p}}(x_1, \cdots, x_k) = H_{{\mathbf p}}^{-1}(x_1, \cdots, x_k, {\mathbf 0}_{n-k})$$ 
is a one-to-one infinitely differentiable mapping from $W_{{\mathbf p}}$ to ${\mathbf g} (A_{{\mathbf p}})$.
Thus, for a set ${\mathcal M} = {\mathbf g} ({\mathbb R}^n)$ and the topology $T$ one can deduce the following: A) for any ${\mathbf q}\in {\mathcal M}$, there is ${\mathbf p}\in {\mathbb R}^n$ such that ${\mathbf q}={\mathbf g}({\mathbf p})$; B) there is a neighbourhood ${\mathbf g} (A_{{\mathbf p}})\in T$ of ${\mathbf q}$, an open superset $B_{{\mathbf p}}\supseteq {\mathbf g} (A_{{\mathbf p}})$ and mappings $\phi = \phi_{{\mathbf p}}: W_{{\mathbf p}} \rightarrow {\mathbf g} (A_{{\mathbf p}})$, $\psi = \theta_n^k \circ H_{{\mathbf p}}: B_{{\mathbf p}}\rightarrow {\mathbb R}^k$, where $\theta_n^k(x_1, \cdots, x_n) = (x_1, \cdots, x_k)$, such that: a) $\phi$ is infinitely differentiable and one-to-one, b) $\psi$ is also infinitely differentiable and $\psi\circ \phi (x)=x$. Thus, ${\mathbf g} (A_{{\mathbf p}})$ is like ${\mathbb R}^k$. We conclude that there is a family of open sets in the topological space $({\mathbf g}({\mathbb R}^n), T[{\mathbf g}])$, i.e.
$$
\{{\mathbf g} (A_{{\mathbf p}})| {\mathbf p}\in {\mathbb R}^n\}\subseteq T[{\mathbf g}],
$$
such that any ${\mathbf p}\in {\mathbb R}^n$ has a neighbourhood ${\mathbf g} (A_{{\mathbf p}})$ that is like ${\mathbb R}^k$.

\section*{Proof of Theorem~\ref{locality}}
We need to use the Constant Rank Theorem. Since ${\rm rank\,} J_{{\mathbf g}}({\mathbf x}) = k$, then ${\rm rank\,} J_{{\mathbf g}}({\mathbf y}) = k$ for any ${\mathbf y}$ in some neighbourhood $U$ of ${\mathbf x}$.
Since ${\mathbf g}: U \rightarrow {\mathbb R}^n$ is infinitely differentiable, there are open sets $A_{{\mathbf x}}\ni {\mathbf x}$, $B_{{\mathbf x}'}\ni {\mathbf x}'$, $A_{{\mathbf x}}\subseteq U$ and infinitely differentiable one-to-one mappings $G: A_{{\mathbf x}}\rightarrow U_{{\mathbf x}}$ and $H: B_{{\mathbf x}'}\rightarrow V_{{\mathbf x}'}$ such that $$H\circ {\mathbf g} \circ G^{-1}(x_1, \cdots, x_n) = (x_1, \cdots, x_k, {\mathbf 0}_{n-k})$$
for any $(x_1, \cdots, x_n)\in U_{{\mathbf x}}$. Therefore, 
$${\mathbf g}(A_{{\mathbf x}}) = \{H^{-1}(x_1, \cdots, x_k, {\mathbf 0}_{n-k})| (x_1, \cdots, x_n)\in U_{{\mathbf x}}\}.$$
By construction, ${\mathbf g}(A_{{\mathbf x}})\subseteq \mathcal{M}\cap B_{{\mathbf x}'}$. 

The set ${\mathbf g}(A_{{\mathbf x}})$ is like ${\mathbb R}^k$, because $\phi: \theta_n^k(U_{{\mathbf x}})\to {\mathbf g}(A_{{\mathbf x}})$, where $$\phi(x_1, \cdots, x_k) = H^{-1}(x_1, \cdots, x_k, {\mathbf 0}_{n-k}),$$ is one-to-one and smooth. Its inverse $\phi^{-1}$ can be extended to a smooth function $\theta_n^k\circ H$ on $B_{{\mathbf x}'}$ because $\phi^{-1} = \theta_n^k\circ H|_{{\mathbf g}(A_{{\mathbf x}})}$.

For ${\mathbf x}'\in \mathcal{M}$ one can find an open set $V\ni {\mathbf x}'\subseteq {\mathbb R}^n$, such that $\mathcal{M}\cap V$ is like ${\mathbb R}^k$. Since $V$ is open, ${\mathbf g}(A_{{\mathbf x}})\cap V$ is also like ${\mathbb R}^k$. Analogously, from $B_{{\mathbf x}'}\subseteq {\mathbb R}^n$ being open we conclude that $\mathcal{M}\cap V\cap B_{{\mathbf x}'}$ is like ${\mathbb R}^k$ and
$$
{\mathbf g}(A_{{\mathbf x}})\cap V \subseteq \mathcal{M}\cap V\cap B_{{\mathbf x}'}.
$$
Using Lemma~\ref{inc-lemma} we conclude that there exists an open set $O\subseteq {\mathbb R}^n$ such that ${\mathbf g}(A_{{\mathbf x}})\cap V = \mathcal{M}\cap V\cap B_{{\mathbf x}'}\cap O$.  Since ${\mathbf g}(A_{{\mathbf x}})\cap V  = {\mathbf g}(A_{{\mathbf x}}\cap {\mathbf g}^{-1}(V))$ we finally obtain
$$
{\mathbf g}(\Omega) = \mathcal{M}\cap \Omega' ,
$$ 
where $\Omega = A_{{\mathbf x}}\cap {\mathbf g}^{-1}(V)$, $\Omega'=V\cap B_{{\mathbf x}'}\cap O$.
\begin{lemma}\label{inc-lemma} If subsets $A\subseteq B$ of ${\mathbb R}^n$ are both like ${\mathbb R}^k$, then there exists an open set $O\subseteq {\mathbb R}^n$ such that $A = B\cap O$. 
\end{lemma}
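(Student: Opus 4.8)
Unpacking Definition~\ref{like}, since $A$ is like ${\mathbb R}^k$ we may fix an open $V_A\subseteq{\mathbb R}^k$, a smooth bijection $\phi_A\colon V_A\to A$, an open superset $S_A\supseteq A$, and a smooth map $\psi_A\colon S_A\to{\mathbb R}^k$ with $\psi_A|_A=\phi_A^{-1}$; likewise fix $V_B,\phi_B,S_B,\psi_B$ witnessing that $B$ is like ${\mathbb R}^k$. The plan is to take $O=\psi_B^{-1}(W)$ where $W:=\phi_B^{-1}(A)\subseteq V_B$. If $W$ turns out to be open in ${\mathbb R}^k$, then $O$ is open in $S_B$, hence in ${\mathbb R}^n$, and $A=B\cap O$ follows directly: for $x\in A\subseteq B$ we have $\psi_B(x)=\phi_B^{-1}(x)\in W$, so $x\in O$; conversely, for $x\in B\cap O$ we have $\psi_B(x)=\phi_B^{-1}(x)$ and this point lies in $W=\phi_B^{-1}(A)$, so injectivity of $\phi_B$ on $V_B$ gives $x\in A$. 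Thus the whole task reduces to proving that $W=\phi_B^{-1}(A)$ is open.

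To see this, I would use that $A$ itself is like ${\mathbb R}^k$. Set $f:=\psi_B\circ\phi_A\colon V_A\to{\mathbb R}^k$, which is smooth; since $\phi_A(V_A)=A\subseteq B$ we have $f=\phi_B^{-1}\circ\phi_A$ on $V_A$, so $f$ is a bijection from $V_A$ onto $W$. Next let $V_B':=\{v\in V_B:\phi_B(v)\in S_A\}$, an open subset of ${\mathbb R}^k$ containing $W$ (because $\phi_B(W)=A\subseteq S_A$), and put $g:=\psi_A\circ\phi_B\colon V_B'\to{\mathbb R}^k$, which is smooth. A direct computation shows $f(V_A)\subseteq V_B'$ and $g\circ f=\mathrm{id}_{V_A}$: for $v\in V_A$ one has $f(v)=\phi_B^{-1}(\phi_A(v))$, hence $\phi_B(f(v))=\phi_A(v)\in A\subseteq S_A$, so $f(v)\in V_B'$ and $g(f(v))=\psi_A(\phi_A(v))=\phi_A^{-1}(\phi_A(v))=v$.

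Now the chain rule applied to $g\circ f=\mathrm{id}_{V_A}$ gives $J_g(f(v))\,J_f(v)=I_k$ for every $v\in V_A$, so the $k\times k$ matrix $J_f(v)$ is invertible throughout $V_A$. By the Inverse Function Theorem $f$ is therefore an open map, so $W=f(V_A)$ is open in ${\mathbb R}^k$, which completes the argument. I expect the only real care needed is bookkeeping: making sure the compositions $f$ and $g$ are defined and smooth on genuinely open subsets of ${\mathbb R}^k$ rather than on the (a priori non-open) sets $A$ and $B$ --- this is exactly what the smooth extensions $\psi_A,\psi_B$ built into Definition~\ref{like} provide --- and verifying $g\circ f=\mathrm{id}_{V_A}$ without a circular appeal to the very openness we are trying to establish.
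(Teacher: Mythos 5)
Your proof is correct, and while it follows the paper's overall blueprint — take $O=\psi_B^{-1}\bigl(\phi_B^{-1}(A)\bigr)$ and reduce everything to showing that $W=\phi_B^{-1}(A)=(\psi_B\circ\phi_A)(V_A)$ is open in ${\mathbb R}^k$ — you settle that key point by a genuinely different argument. The paper simply asserts that the image of the open set $V_A$ under the continuous injective map $\psi_B\circ\phi_A\colon V_A\to{\mathbb R}^k$ is open, which is Brouwer's invariance of domain (used without citation there). You instead exploit the smooth extension $\psi_A$ that Definition~\ref{like} also provides for $A$: you build the smooth left inverse $g=\psi_A\circ\phi_B$ on an open set containing $f(V_A)$, differentiate $g\circ f=\mathrm{id}_{V_A}$ to get $J_g(f(v))J_f(v)=I_k$, hence invertibility of the square matrix $J_f(v)$, and then apply the Inverse Function Theorem to conclude $f$ is an open map, so $W=f(V_A)$ is open. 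What each route buys: yours is self-contained at the level of multivariable calculus, avoids algebraic topology entirely, and actually uses the full strength of the ``like ${\mathbb R}^k$'' definition (the smooth extendability of $\phi_A^{-1}$), which the paper's proof leaves idle; the paper's route is shorter and would still work if $\phi_A$ were merely a homeomorphism, but it rests on a deep topological theorem. Your remaining bookkeeping — openness of $V_B'$ via continuity of $\phi_B$, the inclusion $f(V_A)\subseteq V_B'$, and the verification $A=B\cap O$ from injectivity of $\phi_B$ — all checks out.
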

\begin{proof} By definition~\ref{like}, there are open sets $S_B\supseteq B, T_B\subseteq {\mathbb R}^k$ and a smooth one-to-one function $\phi_B: T_B\to B$ such that $\phi^{-1}_B$ can be extended to a smooth $\psi_B:S_B\to {\mathbb R}^k$, i.e. $\phi_B^{-1}(x)=\psi_B(x), x\in B$. For a set $A$, analogously $\phi_A, \psi_A, S_A, T_A$ are defined.

Let us prove that $\phi^{-1}_B(A)$ is open in ${\mathbb R}^k$. Indeed, since  $\phi_A: T_A\to A$ is one-to-one, then we conclude $\phi^{-1}_B(A) = \phi^{-1}_B(\phi_A(T_A)) = (\phi^{-1}_B\circ \phi_A)(T_A) =(\psi_B\circ \phi_A)(T_A)$. The  image of the open set $T_A$ is also open due to continuity and injectivity of $\psi_B\circ \phi_A: T_A\to {\mathbb R}^k$.  

Thus, $\psi^{-1}_B(\phi^{-1}_B(A)) = \psi^{-1}_B(\psi_B(A))$ is also open, and $O = \psi^{-1}_B(\psi_B(A))$ satisfies $O\cap B \subseteq A$ because for any $a\in B\setminus A$ we have $\psi_B(a)\notin \psi_B(A)$ (i.e. $a\notin \psi^{-1}_B(\psi_B(A))$). The opposite inclusion $O\cap B \supseteq A$ is obvious. Thus, $O\cap B = A$.
\end{proof}

\section*{Proof of Theorem~\ref{bound}}
Let us define $\boldsymbol{\gamma}_{{\mathbf u}}(t) = {\mathbf g}({\mathbf x}'+{\mathbf u}t)$ where ${\mathbf u}\in {\mathbb R}^n$ is such that $J_{{\mathbf g}} ({\mathbf x}'){\mathbf u}$ is a nonzero vector.

\begin{lemma}
If ${\mathbf x}$ satisfies the conditions of theorem, then
$$
\kappa ({\mathbf x}) \leq \sup_{{\mathbf u}: J_{{\mathbf g}} ({\mathbf x}'){\mathbf u}\ne {\mathbf 0}}\frac{\|\boldsymbol{\gamma}_{{\mathbf u}}'(0)\wedge \boldsymbol{\gamma}''_{{\mathbf u}}(0)\|}{\|\boldsymbol{\gamma}_{{\mathbf u}}'(0)\|^3},
$$
where $\wedge$ denotes the wedge product, i.e. $(\sum_{i=1}^n x_i {\mathbf e}_i)\wedge (\sum_{i=1}^n y_i {\mathbf e}_i) = \sum_{ij} x_i y_j ({\mathbf e}_i \wedge {\mathbf e}_j)$ where ${\mathbf e}_i \wedge {\mathbf e}_i={\mathbf 0}$, ${\mathbf e}_i \wedge {\mathbf e}_j = -{\mathbf e}_j \wedge {\mathbf e}_i$ and $\{{\mathbf e}_i \wedge {\mathbf e}_j|i< j\}$ is an orthonormal system of vectors. 
\end{lemma}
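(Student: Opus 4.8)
The plan is to replace the supremum over geodesics in the definition of $\kappa(\mathbf{x})$ by a supremum over the curves $\boldsymbol{\gamma}_{\mathbf u}$, by lifting an arbitrary geodesic through $\mathbf g$ and comparing second derivatives at the base point. Fix an arc-length parametrized geodesic $\boldsymbol{\gamma}:[-1,1]\to\mathcal M$ with $\boldsymbol{\gamma}(0)=\mathbf x$; then $\|\boldsymbol{\gamma}'(0)\|=1$, and, since a geodesic of a submanifold of ${\mathbb R}^n$ has ambient acceleration pointwise orthogonal to the manifold, $\boldsymbol{\gamma}''(0)\perp {\rm col}(J_{\mathbf g}(\mathbf x'))$ (recall the tangent space of $\mathcal M$ at $\mathbf x$ is exactly ${\rm col}(J_{\mathbf g}(\mathbf x'))$). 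The aim is to exhibit $\mathbf u$ with $J_{\mathbf g}(\mathbf x')\mathbf u\ne\mathbf 0$ for which $\|\boldsymbol{\gamma}''(0)\|\le \|\boldsymbol{\gamma}_{\mathbf u}'(0)\wedge\boldsymbol{\gamma}_{\mathbf u}''(0)\|/\|\boldsymbol{\gamma}_{\mathbf u}'(0)\|^3$; taking the supremum over all such $\boldsymbol{\gamma}$ then gives the asserted bound on $\kappa(\mathbf x)$.

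To build $\mathbf u$ I would lift $\boldsymbol{\gamma}$ through $\mathbf g$. By Theorem~\ref{locality} there are open $\Omega\ni\mathbf x'$ and $\Omega'\ni\mathbf x$ with $\mathcal M\cap\Omega'={\mathbf g}(\Omega)$, so ${\mathbf g}(\Omega)$ is open in $\mathcal M$ and hence $\boldsymbol{\gamma}(s)\in{\mathbf g}(\Omega)$ for all small $|s|$. Moreover the Constant Rank Theorem writes $\mathbf g$, near $\mathbf x'$, as a coordinate projection up to diffeomorphisms, and freezing the ``projected-out'' coordinates at their values at $\mathbf x'$ produces a smooth local section $\mathbf s$ of $\mathbf g$ near $\mathbf x$ with $\mathbf s(\mathbf x)=\mathbf x'$. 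Put $\boldsymbol{\delta}(s)=\mathbf s(\boldsymbol{\gamma}(s))$ and $\mathbf u=\boldsymbol{\delta}'(0)$; then $\boldsymbol{\delta}$ is twice differentiable, $\boldsymbol{\delta}(0)=\mathbf x'$, and ${\mathbf g}(\boldsymbol{\delta}(s))=\boldsymbol{\gamma}(s)$.

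Differentiating $\boldsymbol{\gamma}(s)={\mathbf g}(\boldsymbol{\delta}(s))$ twice at $s=0$ is then routine: the chain rule gives $\boldsymbol{\gamma}'(0)=J_{\mathbf g}(\mathbf x')\mathbf u=\boldsymbol{\gamma}_{\mathbf u}'(0)$, and a component-wise second differentiation gives $\gamma_i''(0)=\mathbf u^T H_{g_i}(\mathbf x')\mathbf u+\nabla g_i(\mathbf x')^T\boldsymbol{\delta}''(0)$, i.e. $\boldsymbol{\gamma}''(0)=\boldsymbol{\gamma}_{\mathbf u}''(0)+J_{\mathbf g}(\mathbf x')\boldsymbol{\delta}''(0)$. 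Here $J_{\mathbf g}(\mathbf x')\boldsymbol{\delta}''(0)\in{\rm col}(J_{\mathbf g}(\mathbf x'))$ is tangent while $\boldsymbol{\gamma}''(0)$ is normal, so this equality is exactly the orthogonal decomposition of $\boldsymbol{\gamma}_{\mathbf u}''(0)$ into its normal part $\boldsymbol{\gamma}''(0)$ and its tangential part $-J_{\mathbf g}(\mathbf x')\boldsymbol{\delta}''(0)$. Since $\boldsymbol{\gamma}_{\mathbf u}'(0)=\boldsymbol{\gamma}'(0)$ is tangent, $\boldsymbol{\gamma}''(0)$ is in particular orthogonal to $\boldsymbol{\gamma}_{\mathbf u}'(0)$, whence $\|\boldsymbol{\gamma}''(0)\|$ is at most the norm of the component of $\boldsymbol{\gamma}_{\mathbf u}''(0)$ orthogonal to $\boldsymbol{\gamma}_{\mathbf u}'(0)$. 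Finally $\|\boldsymbol{\gamma}_{\mathbf u}'(0)\|=\|\boldsymbol{\gamma}'(0)\|=1\ne 0$, and by $\|\mathbf a\wedge\mathbf b\|^2=\|\mathbf a\|^2\|\mathbf b\|^2-(\mathbf a\cdot\mathbf b)^2$ that last norm equals $\|\boldsymbol{\gamma}_{\mathbf u}'(0)\wedge\boldsymbol{\gamma}_{\mathbf u}''(0)\|/\|\boldsymbol{\gamma}_{\mathbf u}'(0)\|^3$, so this $\mathbf u$ is admissible in the supremum and yields the required inequality.

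The step I expect to be the main obstacle is the lift: showing that the geodesic $\boldsymbol{\gamma}$, a priori only a curve in $\mathcal M$, can be written as ${\mathbf g}\circ\boldsymbol{\delta}$ with $\boldsymbol{\delta}$ twice differentiable and passing through $\mathbf x'$. This is exactly where the hypothesis ${\rm rank}\,J_{\mathbf g}(\mathbf x')=k$ enters, through Theorem~\ref{locality} and the Constant Rank normal form, and one must be careful to use the section furnished by that normal form rather than an arbitrary right inverse of the many-to-one map $\mathbf g$; the remaining steps are elementary differentiation and linear algebra in ${\mathbb R}^n$.
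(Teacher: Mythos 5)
Your proof is correct, and it reaches the inequality by a somewhat different mechanism than the paper. The paper first computes, via an arc-length reparametrization of $\boldsymbol{\gamma}_{\mathbf u}$, the classical curvature formula $\|\boldsymbol{\gamma}_{\mathbf u}'(0)\wedge\boldsymbol{\gamma}_{\mathbf u}''(0)\|/\|\boldsymbol{\gamma}_{\mathbf u}'(0)\|^3$, and then disposes of the comparison with the geodesic in one sentence, citing the extremal property that a geodesic has the smallest curvature among curves on $\mathcal M$ with the same tangent direction (implicitly also using $\operatorname{rank} J_{\mathbf g}({\mathbf x}')=k$ to find a ${\mathbf u}$ with $J_{\mathbf g}({\mathbf x}'){\mathbf u}$ parallel to $\boldsymbol{\gamma}'(0)$). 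You instead skip the reparametrization entirely and effectively \emph{prove} that extremal property in the situation at hand: you lift the geodesic through ${\mathbf g}$ using the section supplied by the constant-rank normal form, obtain $\boldsymbol{\gamma}''(0)=\boldsymbol{\gamma}_{\mathbf u}''(0)+J_{\mathbf g}({\mathbf x}')\boldsymbol{\delta}''(0)$ by the chain rule, and combine the normality of the geodesic's ambient acceleration with the nested-projection estimate and the Lagrange identity (legitimately replacing $\|\cdot\|$ by $\|\cdot\|^3$ since $\|\boldsymbol{\gamma}_{\mathbf u}'(0)\|=1$). What your route buys is a self-contained justification of the comparison step and an explicit display of where $\operatorname{rank} J_{\mathbf g}({\mathbf x}')=k$ enters (existence of the local section, with the admissible ${\mathbf u}=\boldsymbol{\delta}'(0)$ produced automatically); what the paper's route buys is the general curvature formula valid for non-unit speed, at the price of leaving the geodesic-minimality fact unproved. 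Two small caveats for your version: you should note that the rank is constant ($=k$) in a neighborhood of ${\mathbf x}'$ (upper bound from Theorem~\ref{primal}, lower bound by lower semicontinuity of rank) before invoking the constant-rank machinery, and that Theorem~\ref{bound} only assumes ${\mathbf g}\in C^2$ while Theorem~\ref{locality} and the Constant Rank Theorem are stated in the paper for infinitely differentiable maps, so you either need the $C^2$ version of the Constant Rank Theorem (which exists and suffices, since you only differentiate $\boldsymbol{\delta}$ twice) or an added smoothness assumption; neither issue affects the substance of the argument.
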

\begin{proof}
Recall that the Lagrange identity gives us $\|{\mathbf a} \wedge {\mathbf b}\|^2 = \|{\mathbf a}\|^2\|{\mathbf b}\|^2-({\mathbf a}\cdot {\mathbf b})^2$.

Let a vector-valued function $\boldsymbol{\gamma}^{{\mathbf u}}$ be the arc-length parameterization of the curve $\boldsymbol{\gamma}_{{\mathbf u}}$, i.e. $\boldsymbol{\gamma}^{{\mathbf u}}(x) = \boldsymbol{\gamma}_{{\mathbf u}}(s^{-1}(x))$ where $s(t)=\int_{0}^t \|\boldsymbol{\gamma}_{{\mathbf u}}'(x)\|dx$. Such a parameterization exists in some neighbourhood of $0$, because $s'(0) = \|\boldsymbol{\gamma}_{{\mathbf u}}'(0)\| = \|J_{{\mathbf g}} ({\mathbf x}'){\mathbf u}\|\ne 0$.

We have a standard derivation of the curve's curvature in terms of $\boldsymbol{\gamma}_{{\mathbf u}}$, i.e.
\begin{equation*}
\begin{split}
\|\frac{\partial^2 \boldsymbol{\gamma}^{{\mathbf u}}(0)}{\partial s^2}\| = \|\frac{\partial \big(\frac{\partial \boldsymbol{\gamma}^{{\mathbf u}}}{\partial s}\big)/\partial t}{\partial s/\partial t}(0)\| = 
\|\frac{\partial \big(\frac{\partial \boldsymbol{\gamma}_{{\mathbf u}}/\partial t}{\partial s/\partial t}\big)/\partial t}{\partial s/\partial t}(0)\| =\\ \|\frac{\frac{\boldsymbol{\gamma}_{{\mathbf u}}''(0) s'(0)-\boldsymbol{\gamma}_{{\mathbf u}}'(0)s''(0)}{s'(0)^2}}{s'(0)}\| = 
\frac{\|\boldsymbol{\gamma}_{{\mathbf u}}''(0) s'(0)-\boldsymbol{\gamma}_{{\mathbf u}}'(0)s''(0)\|}{s'(0)^3} .
\end{split}
\end{equation*}
Using $s'(x) = \|\boldsymbol{\gamma}'_{{\mathbf u}}(x)\|$ and $s''(x) = \frac{\boldsymbol{\gamma}_{{\mathbf u}}'(x)\cdot \boldsymbol{\gamma}_{{\mathbf u}}''(x)}{\|\boldsymbol{\gamma}_{{\mathbf u}}'(x)\|}$ we obtain
\begin{equation*}
\begin{split}
\frac{\|\boldsymbol{\gamma}_{{\mathbf u}}''(0) \|\boldsymbol{\gamma}'_{{\mathbf u}}(0)\|-\boldsymbol{\gamma}_{{\mathbf u}}'(0)\frac{\boldsymbol{\gamma}_{{\mathbf u}}'(0)\cdot \boldsymbol{\gamma}_{{\mathbf u}}''(0)}{\|\boldsymbol{\gamma}_{{\mathbf u}}'(0)\|}\|}{\|\boldsymbol{\gamma}'_{{\mathbf u}}(0)\|^3} = \\
\frac{\|\boldsymbol{\gamma}_{{\mathbf u}}''(0) (\boldsymbol{\gamma}'_{{\mathbf u}}(0)\cdot \boldsymbol{\gamma}'_{{\mathbf u}}(0)) -\boldsymbol{\gamma}_{{\mathbf u}}'(0)(\boldsymbol{\gamma}_{{\mathbf u}}'(0)\cdot \boldsymbol{\gamma}_{{\mathbf u}}''(0))\|}{\|\boldsymbol{\gamma}'_{{\mathbf u}}(0)\|^4}.
\end{split}
\end{equation*}
Direct computation gives $({\mathbf b} ({\mathbf a} \cdot {\mathbf a} )-{\mathbf a} ({\mathbf a} \cdot {\mathbf b} ))^2 =  \|{\mathbf a} \|^2 (\|{\mathbf a} \|^2\|{\mathbf b}\|^2-({\mathbf a} \cdot {\mathbf b})^2  ) = \|{\mathbf a} \|^2 \|({\mathbf a} \wedge {\mathbf b})\|^2$, and we have
$$
\|\frac{\partial^2 \boldsymbol{\gamma}^{{\mathbf u}}(0)}{\partial s^2}\| = \frac{\|\boldsymbol{\gamma}_{{\mathbf u}}'(0)\wedge \boldsymbol{\gamma}''_{{\mathbf u}}(0)\|}{\|\boldsymbol{\gamma}_{{\mathbf u}}'(0)\|^3}.
$$
If $\gamma:[-1,1]\to \mathcal{M}$ is a geodesic curve such that $\gamma'(0)={\mathbf u}$, then its curvature at ${\mathbf x}$ is bounded by the curvature of $\gamma_{{\mathbf u}}$ at ${\mathbf x}$, because both curves have the same direction (given by ${\mathbf u}$) at that point, but $\gamma$ has the lowest curvature among such curves on $\mathcal{M}$.
Thus, we proved
$$
\kappa ({\mathbf x}) \leq \sup_{{\mathbf u}: J_{{\mathbf g}} ({\mathbf x}'){\mathbf u}\ne {\mathbf 0}} \frac{\|\boldsymbol{\gamma}_{{\mathbf u}}'(0)\wedge \boldsymbol{\gamma}''_{{\mathbf u}}(0)\|}{\|\boldsymbol{\gamma}_{{\mathbf u}}'(0)\|^3}.
$$
\end{proof}
Using the lemma, it is now enough to prove
\begin{equation*}
\begin{split}
\sup_{{\mathbf u}: J_{{\mathbf g}} ({\mathbf x}'){\mathbf u}\ne {\mathbf 0}}\frac{\|\boldsymbol{\gamma}_{{\mathbf u}}'(0)\wedge \boldsymbol{\gamma}''_{{\mathbf u}}(0)\|}{\|\boldsymbol{\gamma}_{{\mathbf u}}'(0)\|^3} \leq 
\overline{\lim}_{\boldsymbol{\varepsilon}\to {\mathbf 0}}\min_{\lambda}\frac{\|[\boldsymbol{\varepsilon}^T H_{g_i}({\mathbf x}')\boldsymbol{\varepsilon}]_{i=1}^n-\lambda J_{{\mathbf g}} ({\mathbf x}')\boldsymbol{\varepsilon}\|}{\|J_{{\mathbf g}}\boldsymbol{\varepsilon}\|^2}.
\end{split}
\end{equation*}
Note that for any two vectors ${\mathbf a},  {\mathbf b}$ we have
\begin{equation*}
\begin{split}
\sqrt{({\mathbf a}\wedge {\mathbf b})\cdot ({\mathbf a}\wedge {\mathbf b})} = \sqrt{\|{\mathbf a}\|^2\|{\mathbf b}\|^2-({\mathbf a}\cdot {\mathbf b})^2} =
\|({\mathbf a}\wedge {\rm proj}_{\{{\mathbf a}\}^\perp}{\mathbf b})\| =  \\
\|{\mathbf a}\| \cdot \|{\rm proj}_{\{{\mathbf a}\}^\perp}{\mathbf b}\| = \min_{\lambda} \|{\mathbf a}\|\cdot \|{\mathbf b}-\lambda {\mathbf a}\|.
\end{split}
\end{equation*}
Let us set
$[{\mathbf a}]_i = \nabla g_i ({\mathbf x}'){\mathbf u}$ (i.e. ${\mathbf a} = J_{\mathbf g}{\mathbf u}$) and $[{\mathbf b}]_i = {\mathbf u}^T H_{g_i}({\mathbf x}'){\mathbf u}$. 
Thus, we obtain that the former supremum is
\begin{equation*}
\begin{split}
\sup_{{\mathbf u}: J_{{\mathbf g}} ({\mathbf x}'){\mathbf u}\ne {\mathbf 0}}\min_{\lambda}\frac{1}{\|J_{\mathbf g}{\mathbf u}\|^2}\|[{\mathbf u}^T H_{g_i}({\mathbf x}'){\mathbf u}- \lambda\nabla g_i ({\mathbf x}'){\mathbf u}]_{i=1}^n\| = \\
\sup_{{\mathbf u}: J_{{\mathbf g}} ({\mathbf x}'){\mathbf u}\ne {\mathbf 0}}\lim_{t\to 0}\min_{\lambda}\frac{1}{\|J_{\mathbf g}{\mathbf u}t\|^2}\|[{\mathbf u}^T H_{g_i}({\mathbf x}'){\mathbf u}t^2- \lambda\nabla g_i ({\mathbf x}'){\mathbf u}t]_{i=1}^n\|.
\end{split}
\end{equation*}
It remains to note
\begin{equation*}
\begin{split}
\sup_{{\mathbf u}: J_{{\mathbf g}} ({\mathbf x}'){\mathbf u}\ne {\mathbf 0}}\lim_{t\to 0}\min_{\lambda}\frac{1}{\|J_{\mathbf g}{\mathbf u}t\|^2}\|[{\mathbf u}^T H_{g_i}({\mathbf x}'){\mathbf u}t^2- 
\lambda\nabla g_i ({\mathbf x}'){\mathbf u}t]_{i=1}^n\| \leq \\
\overline{\lim}_{\boldsymbol{\varepsilon}\to {\mathbf 0}}\min_{\lambda}\frac{1}{\|J_{{\mathbf g}}\boldsymbol{\varepsilon}\|^2}\|[\boldsymbol{\varepsilon}^T H_{g_i}({\mathbf x}')\boldsymbol{\varepsilon}-\lambda \nabla g_i ({\mathbf x}')\boldsymbol{\varepsilon}]_{i=1}^n\|
\end{split}
\end{equation*}
and the bound is proved.
\begin{remark} The theorem's proof survives if ${\mathbf g}$ is defined on any open subset. Let us look to this bound for the function ${\mathbf g}({\mathbf x}) = \frac{{\mathbf x}}{\|{\mathbf x}\|}$, i.e. ${\mathbf g}: {\mathbb R}^n\setminus \{{\mathbf 0}\}\to S_1({\mathbf 0})$ where $S_1({\mathbf 0}) = \{{\mathbf x}\in {\mathbb R}^n | \, \|{\mathbf x}\|=1\}$. It is well-known that the curvature $\kappa=1$.

Direct computation gives $J_{{\mathbf g}} ({\mathbf x}) = \frac{I_n}{\|{\mathbf x}\|}-\frac{{\mathbf x}{\mathbf x}^T}{\|{\mathbf x}\|^3}$ and $J^2_{{\mathbf g}} ({\mathbf x}) = \frac{I_n}{\|{\mathbf x}\|^2}-\frac{{\mathbf x}{\mathbf x}^T}{\|{\mathbf x}\|^4} = \frac{J_{{\mathbf g}} ({\mathbf x})}{\|{\mathbf x}\|}$. The Hessian of every component is
\begin{equation*}
\begin{split}
H_{g_i} = J_{\frac{{\mathbf e}_i}{\|{\mathbf x}\|}-\frac{x_i{\mathbf x}}{\|{\mathbf x}\|^3}} = 
-\frac{{\mathbf e}_i{\mathbf x}^T}{\|{\mathbf x}\|^3}-\frac{{\mathbf x}{\mathbf e}_i^T}{\|{\mathbf x}\|^3}-x_i(\frac{I_n}{\|{\mathbf x}\|^3}-\frac{3{\mathbf x}{\mathbf x}^T}{\|{\mathbf x}\|^5}) .
\end{split}
\end{equation*}
Thus,
\begin{equation*}
\begin{split}
[\boldsymbol{\varepsilon}^T H_{g_i} \boldsymbol{\varepsilon}]_{i=1}^n = 
[-2\varepsilon_i\frac{{\mathbf x}^T\boldsymbol{\varepsilon}}{\|{\mathbf x}\|^3} - x_i\frac{\boldsymbol{\varepsilon}^T \boldsymbol{\varepsilon}}{\|{\mathbf x}\|^3}+x_i\frac{3({\mathbf x}^T\boldsymbol{\varepsilon})^2}{\|{\mathbf x}\|^5}]_{i=1}^n  = \\
-2\boldsymbol{\varepsilon}\frac{{\mathbf x}^T\boldsymbol{\varepsilon}}{\|{\mathbf x}\|^3} + {\mathbf x}(-\frac{\boldsymbol{\varepsilon}^T \boldsymbol{\varepsilon}}{\|{\mathbf x}\|^3}+\frac{3({\mathbf x}^T\boldsymbol{\varepsilon})^2}{\|{\mathbf x}\|^5}), \\
J_{{\mathbf g}} ({\mathbf x})\boldsymbol{\varepsilon} = \frac{\boldsymbol{\varepsilon}}{\|{\mathbf x}\|}-{\mathbf x}\frac{{\mathbf x}^T\boldsymbol{\varepsilon}}{\|{\mathbf x}\|^3}.
\end{split}
\end{equation*}
Let us denote ${\mathbf a} = [\boldsymbol{\varepsilon}^T H_{g_i} \boldsymbol{\varepsilon}]_{i=1}^n + 2\frac{{\mathbf x}^T\boldsymbol{\varepsilon}}{\|{\mathbf x}\|^2} J_{{\mathbf g}} ({\mathbf x})\boldsymbol{\varepsilon} $, i.e.
\begin{equation*}
\begin{split}
{\mathbf a} = {\mathbf x}(-\frac{\boldsymbol{\varepsilon}^T \boldsymbol{\varepsilon}}{\|{\mathbf x}\|^3}+\frac{({\mathbf x}^T\boldsymbol{\varepsilon})^2}{\|{\mathbf x}\|^5}) .
\end{split}
\end{equation*}
Therefore,
\begin{equation*}
\begin{split}
\|{\mathbf a}\| = |-\frac{\boldsymbol{\varepsilon}^T \boldsymbol{\varepsilon}}{\|{\mathbf x}\|^3}+\frac{({\mathbf x}^T\boldsymbol{\varepsilon})^2}{\|{\mathbf x}\|^5}| \cdot \|{\mathbf x}\| = \boldsymbol{\varepsilon}^T J^2_{{\mathbf g}} ({\mathbf x}) \boldsymbol{\varepsilon} \\
\end{split}
\end{equation*}
and we conclude 
\begin{equation*}
\begin{split}
\kappa({\mathbf x}) \leq 
\overline{\lim}_{\boldsymbol{\varepsilon}\to {\mathbf 0}} \min_{\lambda}\frac{\|[\boldsymbol{\varepsilon}^T H_{g_i} \boldsymbol{\varepsilon}]_{i=1}^n-\lambda J_{{\mathbf g}} ({\mathbf x}')\boldsymbol{\varepsilon}\|}{\|J_{{\mathbf g}}\boldsymbol{\varepsilon}\|^2}  \leq 
\frac{\|{\mathbf a}\|}{\boldsymbol{\varepsilon}^T J^2_{{\mathbf g}} ({\mathbf x}) \boldsymbol{\varepsilon}}=1.
\end{split}
\end{equation*}
In other words, the upper bound of Theorem~\ref{bound} equals the curvature at each point.
\end{remark}

\section*{Proof of Theorem~\ref{encoder-decoder}}

First we bound:
\begin{equation*}
\begin{split}
\min_{\lambda}\frac{\|[\boldsymbol{\varepsilon}^T H_{g_i}({\mathbf x}')\boldsymbol{\varepsilon}]_{i=1}^n-\lambda J_{{\mathbf g}} ({\mathbf x}')\boldsymbol{\varepsilon}\|}{\|J_{{\mathbf g}}({\mathbf x}')\boldsymbol{\varepsilon}\|^2} = \min_{\lambda}\frac{\|[\boldsymbol{\varepsilon}^T H_{g_i}({\mathbf x}')\boldsymbol{\varepsilon}]_{i=1}^n-\lambda J_{{\mathbf g}} ({\mathbf x}')\boldsymbol{\varepsilon}\|}{\|J_{{\mathbf d}}J_{{\mathbf e}}\boldsymbol{\varepsilon}\|^2} \leq \\
\min_{\lambda}\frac{\|[\boldsymbol{\varepsilon}^T H_{g_i}({\mathbf x}')\boldsymbol{\varepsilon}]_{i=1}^n-\lambda J_{{\mathbf g}} ({\mathbf x}')\boldsymbol{\varepsilon}\|}{\sigma^2_k\|J_{{\mathbf e}}\boldsymbol{\varepsilon}\|^2} .
\end{split}
\end{equation*}
Then, for any $f=d_i$ one can express the Jacobian and the Hessian of $f\circ {\mathbf e}$ as
\begin{equation*}
\begin{split}
J_{f\circ {\mathbf e}} \boldsymbol{\varepsilon} = \sum_{i,j} \partial_i f \partial_j e_i \varepsilon_j \\
\boldsymbol{\varepsilon}^T H_{f\circ {\mathbf e}}\boldsymbol{\varepsilon} = \sum_{i,j,k}\partial_i f \partial_{j,k} e_i(x)\varepsilon_j \varepsilon_k+
\sum_{i,j,k,l} \partial_{i,l} f\partial_{k}e_l \partial_j e_i \varepsilon_j\varepsilon_k = \\
\sum_{i}\partial_i f  \boldsymbol{\varepsilon}^T H_{e_i}\boldsymbol{\varepsilon}+\sum_{i,l}\partial_{i,l} f (J_{{\mathbf e}} \boldsymbol{\varepsilon})_i (J_{{\mathbf e}} \boldsymbol{\varepsilon})_l = 
\nabla f  [\boldsymbol{\varepsilon}^T H_{e_i}\boldsymbol{\varepsilon}]_{i=1}^k+ H_{f}\cdot J_{{\mathbf e}} \boldsymbol{\varepsilon}\boldsymbol{\varepsilon}^T J_{{\mathbf e}}^T.
\end{split}
\end{equation*}
Thus, we have
$$
[\boldsymbol{\varepsilon}^T H_{g_i}({\mathbf x}')\boldsymbol{\varepsilon}]_{i=1}^n = J_{\mathbf d}[\boldsymbol{\varepsilon}^T H_{e_i}\boldsymbol{\varepsilon}]_{i=1}^k +[H_{d_i}\cdot J_{{\mathbf e}} \boldsymbol{\varepsilon}\boldsymbol{\varepsilon}^T J_{{\mathbf e}}^T]_{i=1}^n,
$$
where $A\cdot B = \sum_{ij}A_{ij}B_{ij}$.
Let us denote $\frac{\|[\boldsymbol{\varepsilon}^T H_{e_i}\boldsymbol{\varepsilon}]_{i=1}^k \wedge J_{\mathbf e}\boldsymbol{\varepsilon}\|}{\|J_{\mathbf e}\boldsymbol{\varepsilon}\|^3}$ by $C$. Then,
\begin{equation*}
\begin{split}
C\|J_{\mathbf e}\boldsymbol{\varepsilon}\|^2  = \frac{\|[\boldsymbol{\varepsilon}^T H_{e_i}\boldsymbol{\varepsilon}]_{i=1}^k \wedge J_{\mathbf e}\boldsymbol{\varepsilon}\|}{\|J_{\mathbf e}\boldsymbol{\varepsilon}\|}, \\
\min_{\lambda}\|[\boldsymbol{\varepsilon}^T H_{e_i}\boldsymbol{\varepsilon}]_{i=1}^k -\lambda J_{\mathbf e}\boldsymbol{\varepsilon}\|  =\|[\boldsymbol{\varepsilon}^T H_{e_i}\boldsymbol{\varepsilon}]_{i=1}^k -\lambda^\ast J_{\mathbf e}\boldsymbol{\varepsilon}\| 
\end{split}
\end{equation*}
for some $\lambda^\ast$.
Finally we obtain
\begin{equation*}
\begin{split}
\min_{\lambda}\|J_{\mathbf d}[\boldsymbol{\varepsilon}^T H_{e_i}\boldsymbol{\varepsilon}]_{i=1}^k +[H_{d_i}\cdot J_{{\mathbf e}} \boldsymbol{\varepsilon}\boldsymbol{\varepsilon}^T J_{{\mathbf e}}^T]_{i=1}^n-
\lambda J_{{\mathbf d}} J_{{\mathbf e}}\boldsymbol{\varepsilon}\| \leq \\ 
\|J_{\mathbf d}([\boldsymbol{\varepsilon}^T H_{e_i}\boldsymbol{\varepsilon}]_{i=1}^k - \lambda^\ast J_{{\mathbf e}}\boldsymbol{\varepsilon}) +[H_{d_i}\cdot J_{{\mathbf e}} \boldsymbol{\varepsilon}\boldsymbol{\varepsilon}^T J_{{\mathbf e}}^T]_{i=1}^n\|\leq \\
\sigma_{1}\|[\boldsymbol{\varepsilon}^T H_{e_i}\boldsymbol{\varepsilon}]_{i=1}^k - \lambda^\ast J_{{\mathbf e}}\boldsymbol{\varepsilon}\|+\|[H_{d_i}\cdot J_{{\mathbf e}} \boldsymbol{\varepsilon}\boldsymbol{\varepsilon}^T J_{{\mathbf e}}^T]_{i=1}^n\| \leq \\
\sigma_1 C\|J_{\mathbf e}\boldsymbol{\varepsilon}\|^2 +\sqrt{\sum_{i}\|H_{d_i}\|_F^2 \|J_{{\mathbf e}} \boldsymbol{\varepsilon}\boldsymbol{\varepsilon}^T J_{{\mathbf e}}^T\|_F^2} =  
\|J_{\mathbf e}\boldsymbol{\varepsilon}\|^2 (\sigma_1 C+\sqrt{\sum_{i}\|H_{d_i}\|_F^2 }).
\end{split}
\end{equation*}
Therefore, $\min_{\lambda}\frac{\|[\boldsymbol{\varepsilon}^T H_{g_i}({\mathbf x}')\boldsymbol{\varepsilon}]_{i=1}^n-\lambda J_{{\mathbf g}} ({\mathbf x}')\boldsymbol{\varepsilon}\|}{\|J_{{\mathbf g}}({\mathbf x}')\boldsymbol{\varepsilon}\|^2} $ is bounded by
$$
 \frac{\sigma_1 C}{\sigma^2_k} + \frac{\sqrt{\sum_{i}\|H_{d_i}\|_F^2 }}{\sigma^2_k}.
$$
Theorem proved.

\section*{Details of experiments}
Hyperparameters in experiments of Section~\ref{rwd} include (a) the number of neurons on the first and second layers of the encoder, denoted by $n_1$ and $n_2$, that by the encoder-decoder symmetry define the number of neurons on the first ($n_2$) and second ($n_1$) layers of a decoder; (b) the curvature regularization parameter $\gamma$; (c) the rank regularization parameter $\lambda$; (d) the variance of added noise $\sigma^2$; (e) the number of iterations of the Algorithm~\ref{alternate}, $T$; (f) the number of points at which the Ky-Fan antinorm of the gradient is penalized, $M$; (g) parameters of Adam Optimizer $\beta_1, \beta_2$ and the learning rate $\alpha$; (h) the batch size $m$ ;(i) the parameter $\beta$ of MTC that controls weight of the tangent propagation term; (j) the parameter $K$ in the $K$-NN. For all datasets parameters (except for $n_1$ and $n_2$) were set according to Table~\ref{hyperparameters}. In all experiments we define $n_2=\frac{n_1}{2}$.
Finally, for each dataset, parameters $n_1$ and $k$ (the number $k$ in the Ky-Fan antinorm) were defined separately: (c) for MNIST, $n_1\in \{300, 400\}$ and $k\in \{30, 60, 90\}$; (d) for CIFAR10, $n_1\in \{400, 500\}$ and $k\in \{50, 100, 150\}$; (e) for SVHN, $n_1\in \{180, 240\}$ and $k\in \{30, 60, 90\}$; (f) for STL-10, $n_1\in \{300, 500\}$ and $k\in \{50, 75, 100\}$. In CAE+H method the same architecture of an autoencoder was used as in the Algorithm~\ref{alternate}. 

\begin{table}
\centering
\begin{tabular}{|c|c|} 
\hline
$\gamma=0.5$, $\sigma=0.8$,\\
$\lambda \in \{10, 40\}$,\\
$T=1000$, $M=1000$, $m =20$,\\
$\beta_1=0.9, \beta_2=0.999, \alpha=0.001$, \\
$\beta \in \{0.0, 0.1, 0.01\}$, \\
$K\in \{1,2,\cdots, 19\}$ \\
\hline
\end{tabular}
\caption{Values of hyperparameters.}\label{hyperparameters}
\end{table}
\end{document}